\DeclareMathOperator{\prob}{Pr}
\DeclareMathOperator{\Expect}{\mathbb{E}}
\newlength\dlf  
\theoremstyle{plain}
\newtheorem{proposition}{Proposition}
\theoremstyle{definition}
\newtheorem{remark}{Remark}
\renewenvironment{proof}[1][\proofname] {\par\pushQED{\qed}\normalfont\topsep6\p@\@plus6\p@\relax\trivlist\item[\hskip\labelsep\bfseries#1\@addpunct{.}]\ignorespaces}{\popQED\endtrivlist\@endpefalse}
\begin{document}
	\title{{Unsupervised Ensemble Classification\\ with \textcolor{black}{Sequential and Networked} Data}}
	%
	
	\author{Panagiotis~A.~Traganitis,~\IEEEmembership{Member,~IEEE,} 
		and~Georgios~B.~Giannakis,~\IEEEmembership{Fellow,~IEEE}
		\IEEEcompsocitemizethanks{\IEEEcompsocthanksitem Panagiotis A. Traganitis and Georgios B. Giannakis are
			with the Dept.\ of Electrical and Computer Engineering and the
			Digital Technology Center, University of Minnesota,
			Minneapolis, MN 55455, USA.\protect\\   
			Work supported by NSF grants 1500713 and 1514056
			Emails: \{traga003@umn.edu, georgios@umn.edu\}
	}}
	
	%
	%
	
	\markboth{IEEE Transactions on Knowledge and Data Engineering,~Compiled \today}%
	{Traganitis \MakeLowercase{\textit{et al.}}: Unsupervised Ensemble Classification with Dependent Data}

\IEEEtitleabstractindextext{%
  \begin{abstract}\justifying 
  	Ensemble learning, the machine learning paradigm where multiple \textcolor{black}{models} are combined, has exhibited promising perfomance in a variety of tasks. The present work focuses on unsupervised ensemble classification. The term unsupervised refers to the ensemble combiner who has no knowledge of the ground-truth labels that each classifier has been trained on. While most prior works on unsupervised ensemble classification are designed for independent and identically distributed (i.i.d.) data, the present work introduces an unsupervised scheme for learning from ensembles of classifiers in the presence of data dependencies.  Two types of data dependencies are considered: sequential data and networked data whose dependencies are captured by a graph. \textcolor{black}{For both, novel moment matching and Expectation-Maximization algorithms are developed. Performance of these algorithms is evaluated on synthetic and real datasets, which indicate that knowledge of data dependencies in the meta-learner is beneficial for the unsupervised ensemble classification task.}
  \end{abstract}

  \begin{IEEEkeywords}
    Ensemble learning, unsupervised, sequential classification, crowdsourcing, dependent data.
  \end{IEEEkeywords}}

\IEEEdisplaynontitleabstractindextext

\maketitle

\IEEEpeerreviewmaketitle

\ifCLASSOPTIONcompsoc
\IEEEraisesectionheading{\section{Introduction}}
\else
\section{Introduction}
\label{sec:introduction}
\fi

\IEEEPARstart{A}{s} social networks, connected ``smart'' devices and highly accurate scientific instruments have permeated society, multiple machine learning, signal processing and data mining algorithms have been developed to process the generated data and draw inferences from them. With most of these algorithms typically designed to operate under different assumptions, combining them can be beneficial because different algorithms can complement each others strengths.

{\color{black}Ensemble learning refers to the task of combining multiple, possibly heterogeneous, machine learning models or learners~\cite{ensemblelearning,zhou_ensemble_methods}.\footnote{\textcolor{black}{The terms learner, model, and classifier will be used interchangeably throughout this manuscript.}}  In particular, ensemble classification refers to combining different classifiers~\cite{kuncheva_combining,Rokach_ensemble}.
Most popular ensemble approaches, such as boosting, bagging and stacking, have been developed for the supervised setup. Boosting methods improve an ensemble of weak learners by iteratively retraining them and properly weighting their outputs~\cite{AdaBoost}. Bagging algorithms train an ensemble of base classifiers on bootstrap samples of a dataset, and combine their results using majority voting. Stacking approaches train a \emph{meta-learner} using the outputs of base learners as features~\cite{stacking1,stacking2}.

In many cases however, labeled data may not be available, and/or one may only have access to pre-trained classifiers or human annotators, justifying the need for \emph{unsupervised} ensemble learning methods. Such a setup emerges in diverse disciplines including medicine~\cite{wright2007multidisciplinary}, biology~\cite{micsinai2012picking}, team decision making, distributed detection, and economics~\cite{forecast}, and has recently gained attention with the advent of crowdsourcing~\cite{crowdsourcing2}, as well as services such as Amazon's Mechanical Turk~\cite{MTurk}, and Figure8~\cite{fig8}, to name a few. Particularly in crowdsourcing, human annotators play the role of base classifiers of an ensemble learning model. Unsupervised ensemble classification is similar to stacking: a \emph{meta-learner} has to learn how to combine the outputs of multiple base learners, in the absence of labeled data. 
}



 Multiple algorithms attempt to address the unsupervised ensemble classification problem, and a common assumption is that the data are independent and identically distributed (i.i.d.) from an unknown distribution~\cite{dawid1979maximum,KOS,zhang2014spectral,jaffe2015estimating,traganitis2018,traganitis2017learning,traganitis_dependent}. In several cases however, additional domain knowledge may be available to the meta-learner. This domain knowledge provides information regarding the data distribution, as well as data dependencies. In this paper, two types of data dependence are considered: sequential and networked data.
Classification of sequential data arises in many natural language processing tasks such as part-of-speech tagging, named-entity recognition, and information extraction, to name a few~\cite{compseqlab}. Examples of networked data, where data correlations or dependencies are captured in a known graph, include citation, social, communication and brain networks among others. {If data do not exhibit network structure, the proposed networked data models can accommodate side information in the form of a graph to enhance the label fusion process.}

\textcolor{black}{A novel unified framework for \emph{unsupervised ensemble classification with data dependencies} is proposed. The presented methods and algorithms are built upon simple concepts from probability, as well as recent advances in tensor decompositions~\cite{sidiropoulos2017tensor} and optimization theory, that enable assessing the reliability of multiple learners, and combining their answers.} Similar to prior works for i.i.d. data, in our proposed model each \textcolor{black}{learner} has a fixed probability of deciding that a datum belongs to class $k$, given that the true class of the datum is $k'$. These probabilities parametrize the \textcolor{black}{learners}. Data dependencies are then encoded in the marginal probability mass function (pmf) of the data labels. For sequential data, the pmf of data labels is assumed to be a Markov chain, while for  networked data the pmf of data labels is assumed to be a Markov Random Field (MRF.) \textcolor{black}{ As we are operating in an unsupervised regime, it is required to assume that \textcolor{black}{learners} make decisions independent of each other. This assumption provides analytical and computational tractability. Under this assumption, the proposed methods are able to extract the probabilities that parametrize learner performance from their responses}. As an initial step \textcolor{black}{ of the proposed framework}, the moment-matching method we introduced
in~\cite{traganitis2018} for ensemble classification of i.i.d. data, is adopted to provide rough estimates for \textcolor{black}{learner} parameters. \textcolor{black}{ At the second step, our newly developed expectation maximization (EM) algorithms are employed to refine the parameters obtained from the initial moment matching step. These EM algorithms are tailored for the dependencies present in the data and produce the final label estimates.} 

The rest of the paper is organized as follows. Section~\ref{sec:prelim} states the problem, and provides preliminaries along with a brief description of the prior art in unsupervised ensemble classification for i.i.d. data. \textcolor{black}{Section~\ref{sec:recap_CI} provides an outline of moment matching and EM methods for i.i.d. data;}  Section~\ref{sec:seq} introduces the proposed approach to unsupervised ensemble classification for sequential data; while Section~\ref{sec:gen} deals with its counterpart for networked data. Section~\ref{sec:numerical_tests} presents numerical tests to evaluate our methods. Finally, concluding remarks and future research directions are given in Section~\ref{sec:conclusion}. 

\noindent\textbf{Notation:} Unless otherwise noted, lowercase bold letters, $\bm{x}$, denote
vectors, uppercase bold letters, $\mathbf{X}$, represent matrices, and
calligraphic uppercase letters, $\mathcal{X}$, stand for sets. The
$(i,j)$th entry of matrix $\mathbf{X}$ is denoted by
$[\mathbf{X}]_{ij}$; $\mathbf{X}^{\top}$ denotes the tranpose of matrix $\mathbf{X}$; $\mathbb{R}^{D}$ stands for the
$D$-dimensional real Euclidean space, $\mathbb{R}_{+}$ for the set of positive real numbers,  $\Expect[\cdot]$ for
expectation, and $\|\cdot\|$ for the $\ell_2$-norm. Underlined capital letters $\underline{X}$ denote tensors; while $[[\mathbf{A},\mathbf{B},\mathbf{C}]]_K$ is used to denote compactly a $K$ parallel factor (PARAFAC) analysis tensor~\cite{sidiropoulos2017tensor} with factor matrices $\mathbf{A} := [\bm{a}_1,\ldots,\bm{a}_K], \mathbf{B}:=[\bm{b}_1,\ldots,\bm{b}_K], \mathbf{C}:=[\bm{c}_1,\ldots,\bm{c}_K]$, that is $[[\mathbf{A},\mathbf{B},\mathbf{C}]]_K = \sum_{k=1}^{K}\bm{a}_k\circ\bm{b}_k\circ\bm{c}_k$. Finally, $\mathcal{I}(A)$ denotes the indicator function of event $A$, i.e. $\mathcal{I}(A) = 1$ if $A$ occurs and is $0$ otherwise. 
\section{Problem Statement and Preliminaries}\label{sec:prelim}

Consider a dataset consisting of $N$ data (possibly vectors) $\{x_n\}_{n=1}^{N}$ each belonging to one of $K$ possible classes with corresponding labels $\{y_n\}_{n=1}^{N}$, e.g. $y_n = k$ if $x_n$ belongs to class $k$. The pairs $\left\{(x_n,y_n)\right\}_{n=1}^{N}$ are drawn from an unknown joint distribution $\rm{D}$, and $X$ and $Y$ denote random variables such that $(X,Y)\sim\rm{D}$. Consider now $M$ \textcolor{black}{pre-trained learners} that observe $\{x_n\}_{n=1}^{N}$, and provide estimates of labels. Let $f_m(x_n)\in\{1,\ldots,K\}$ denote the label assigned to datum $x_n$ by the $m$-th \textcolor{black}{learner}. All \textcolor{black}{learner} responses are then collected at a  meta-learner. Collect all \textcolor{black}{learner} responses in the $M\times N$ matrix $\mathbf{F}$, that has entries $[\mathbf{F}]_{mn} = f_m(x_n)$, and all ground-truth labels in the $N\times 1$ vector $\bm{y} = [y_1,\ldots,y_N]^{\top}$ . 
The task of \emph{unsupervised ensemble classification} is: Given only the \textcolor{black}{learner} responses $\{f_m(x_n), m=1,\ldots,M\}_{n=1}^{N}$, we wish to estimate the ground-truth labels of the data $\{y_n\}$; see Fig.~\ref{fig:ensemble_class}. \textcolor{black}{In contrast to supervised ensemble classification, here the combined models (learners) have been trained beforehand. In addition, the meta-learner does not have access to any ground-truth data to learn a combining function. Thus, the focus of this work is on judiciously combining learner responses. } \textcolor{black}{This setup is similar to crowdsourced  classification, which also has the additional challenge that human annotators (corresponding to the learners in the unsupervised ensemble model) may not provide responses for all $N$ data.}

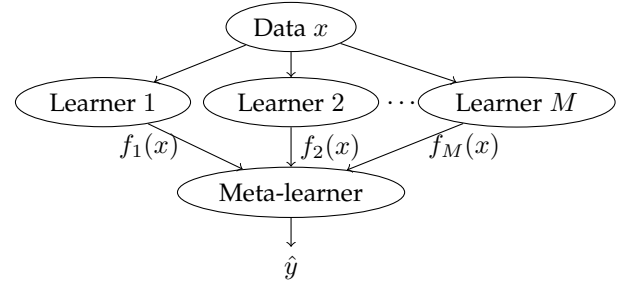
\begin{figure}
	\centering
	\begin{tikzpicture}
	\node[shape=ellipse,draw=black] (data) at (2.5,2) {Data ${x}$};
	\node[shape=ellipse,draw=black] (learner1) at (0,1) {Learner $1$};
	\node[shape=ellipse,draw=black] (learner2) at (2.5,1) {Learner $2$};
	\node[draw=none,fill=none] (dots) at (4,1) {\ldots};
	\node[shape=ellipse,draw=black] (learnerM) at (5.5,1) {Learner $M$};
	\node[shape=ellipse,draw=black] (Metalearner) at (2.5,-0.2) {Meta-learner};
	\node[draw=none,fill=none] (output) at (2.5,-1.2) {$\hat{y}$};

	\path [->] (data) edge node[left] {} (learner1);
	\path [->] (data) edge node[left] {} (learner2);
	\path [->] (data) edge node[right] {} (learnerM);
	\path [->] (learner1) edge node[left] {$f_1({x})$~~} (Metalearner);
	\path [->] (learner2) edge node[right] {$f_2({x})$} (Metalearner);
	\path [->] (learnerM) edge node[right] {~~$f_M({x})$} (Metalearner);
	\path [->] (Metalearner) edge node[right] {} (output);
	\end{tikzpicture}
	\caption{Unsupervised ensemble classification setup, where the outputs of learners are combined in parallel.} \label{fig:ensemble_class}
\end{figure}

{\color{black} Consider that each learner has a fixed probability of deciding that a datum belongs to class $k'$, when presented with a datum of class $k$; and all classifiers behavior is presumed invariant throughout the dataset. Clearly, the performance of each learner $m$ is then characterized by the so-called \emph{confusion} matrix $\mathbf{\Gamma}_m$, whose $(k',k)$-th entry is
	\begin{equation}
	\label{eq:Psi_element}
	[\mathbf{\Gamma}_m]_{k'k} := \Gamma_m(k',k) = \prob\left(f_m(X) = k' | Y = k\right).
	\end{equation}
	The $K\times K$ matrix $\mathbf{\Gamma}_m$ has non-negative entries that obey the simplex constraint, since $\sum_{k'=1}^{K}\prob\left(f_m(X) = k' | Y = k\right) = 1$, for $k=1,\ldots,K$; hence, entries of each $\mathbf{\Gamma}_m$ column sum up to $1$, that is, $\mathbf{\Gamma}_m^{\top}\bm{1} = \bm{1}$ and $\mathbf{\Gamma}_m \geq \bm{0}$. The confusion matrix showcases the statistical behavior of a learner, as each column provides the learners's probability of deciding the correct class, when presented with a datum from each class.

	Before proceeding, we adopt the following assumptions.
	\renewcommand{\labelenumi}{\textbf{As\arabic{enumi}.}}
	\begin{enumerate}
		\item \label{ass:annot_indep}
		Responses of different learners per datum are conditionally independent, given the ground-truth label $Y$ of the same datum $X$; that is, 
		\begin{alignat*}{1}
		& \prob\left(f_1(X)=k_1,\ldots,f_M(X)=k_M | Y=k\right)  \\ &= \prod_{m=1}^{M}\prob\left(f_m(X)=k_m | Y=k\right) = \prod_{m=1}^{M}\Gamma_m(k_m,k)
		\end{alignat*}
		\item \label{ass:better_than_random}
		Most learners are better than random.
	\end{enumerate}
	
	As\ref{ass:annot_indep} is satisfied by learners making decisions independently, which is a rather standard assumption~\cite{dawid1979maximum,jaffe2015estimating,zhang2014spectral}. As1 can be thought of as the manifestation of diversity, which is sought after in ensemble learning systems~\cite{zhou_ensemble_methods}. Graphically, this model is depicted in Fig.~\ref{fig:ds_model}. 
	Under As2, the largest entry per $\mathbf{\Gamma}_m$ column is the one on the diagonal
	\begin{equation*}
	[\mathbf{\Gamma}_m]_{kk}\geq [\mathbf{\Gamma}_m]_{k'k}, \text{ for } k',k=1,\ldots,K.
	\end{equation*}
	As\ref{ass:better_than_random} is another standard assumption, used to alleviate the inherent permutation ambiguity in estimating $\mathbf{\Gamma}_m$.
	
	Based on the aforementioned model, knowledge of the structure of the joint pdf of learner responses and labels is critical.
	When data $\left\{(x_n,y_n)\right\}_{n=1}^{N}$ are i.i.d. the joint pdf can be expressed as a product, that is
	\begin{equation}
	\prob(\mathbf{F},\bm{y}) = \prod_{n=1}^{N}\prob(\mathbf{f}_n,y_n)
	\end{equation}
    where $\mathbf{f}_n := [f_1(x_n),\ldots,f_M(x_n)]^{\top}.$
}
	\begin{figure}[tb]
		\centering
		\begin{tikzpicture}
		\tikzstyle{main}=[ellipse, minimum size = 10mm, thick, draw =black!80, node distance = 8mm]
		\tikzstyle{connect}=[-latex, thick]
		\tikzstyle{box}=[rectangle, draw=black!100]
		
		\node[main] (Y) [] {$Y$};
		
		\node[main,fill=black!10] (O2) [below=of Y] {$f_2(X)$};
		\node[main,fill=black!10] (O1) [left=of O2] {$f_1(X)$};
		\node[main,fill=black!10] (OM) [right=of O2] {$f_M(X)$};
		\path (O2) -- node[auto=false]{\ldots} (OM);
		\path (Y) edge [connect] (O1);
		\path (Y) edge [connect] (O2);
		\path (Y) edge [connect] (OM);
		
		\end{tikzpicture}
		\caption{Graphical depiction of the Dawid-Skene model for i.i.d. data. Shaded ellipses are observed variables (here \textcolor{black}{classifier} responses).}
		\label{fig:ds_model}
	\end{figure}
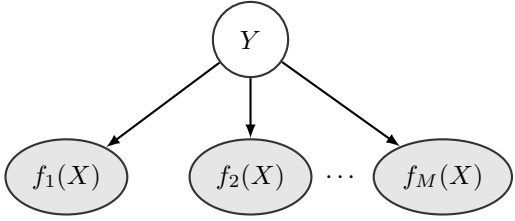

{\color{black} Consider now that data $\left\{(x_n,y_n)\right\}_{n=1}^{N}$  belong to a sequence, i.e. the $n$th datum depends on the $(n-1)$st one. Pertinent settings emerge with speech and natural language processing tasks such as word identification, part-of-speech tagging, named-entity recognition, and information extraction~\cite{compseqlab}.
	
	In order to take advantage of this dependence, we will encode it in the marginal pmf $\prob(\bm{y})$ of the labels. Specifically, we postulate that the sequence of labels $\{y_n\}_{n=1}^{N}$ forms a one-step time-homogeneous Markov chain; that is, variable $y_{n}$ depends only on its immediate predecessor $y_{n-1}$. This Markov chain is characterized by a $K\times K$ transition matrix $\mathbf{T}$, whose $(k,k')$-th entry is given by
	\begin{equation*}
	[\mathbf{T}]_{kk'} = T(k,k')= \prob(y_n = k | y_{n-1} = k').
	\end{equation*}
	Matrix $\mathbf{T}$ has non-negative entries that satisfy the simplex constraint. The marginal probability of  $\{y_n\}_{n=1}^{N}$ can be expressed using successive conditioning, as
	\begin{align}
	\label{eq:markovchain}
	& \prob\left(\bm{y} = \bm{k}\right)=   \prob(y_1 = k_1)\prod_{n=2}^{N}T(k_n,k_{n-1})
	\end{align}
	where $\bm{k} := [k_1,\ldots,k_N]^{\top}$. Accordingly, the data $\{x_n \}_{n=1}^{N}$ depend only on their corresponding $y_n$, and can be drawn from an unknown conditional pdf as $x_n\sim \prob(x_n|y_n = k_n)$. The data pairs $\{({x}_n,y_n)\}_{n=1}^{N}$ form a hidden Markov model (HMM), where the labels $\{ y_n\}_{n=1}^{N}$ correspond to the hidden variables of the HMM, while $\{{x}_n\}_{n=1}^{N}$ correspond to the observed variables of the HMM.
	As with the i.i.d. case, $M$ learners observe $\{x_n\}_{n=1}^{N}$, and provide estimates of their labels $f_m(x_n)$. Under As\ref{ass:annot_indep}, the responses of different learners per datum are conditionally independent, given the ground-truth label $y_n$ of the same datum $x_n$.
    A graphical representation of this HMM is provided in Fig.~\ref{fig:hmm_model}.

Another type of data dependency considered in this work, is the dependency between networked data. 
In many cases, additional information pertaining to the data is available in the form of an undirected graph $\mathcal{G}(\mathcal{V},\mathcal{E})$, where $\mathcal{V}$ and $\mathcal{E}$ denote the vertex (or node) and edge sets of $\mathcal{G}$, respectively. Examples of such networked data include social and citation networks~\cite{cora_citeseer,pubmed}. Each node of this graph corresponds to a data point, thus $|\mathcal{V}| = N$, while the edges capture pairwise relationships between the data.

{For networked data, we will take a similar approach to sequential data and encode data dependence, meaning the pairwise relationships provided by the graph $\mathcal{G}$, in the marginal pmf $\prob(\bm{y})$.}
Specifically, we model the labels $\{y_n\}_{n=1}^{N}$ as being drawn from an MRF.
Due to the MRF structure of $\prob(\bm{y})$, the conditional pmf of $y_n$, for all $n=1,\ldots,N$, satisfies the local Markov property
\begin{equation}
\prob(y_n | \bm{y}_{-n}) = \prob(y_n | \bm{y}_{\mathcal{N}_n}) 
\end{equation}
where $\bm{y}_{-n}$ is a vector containing all labels except $y_n$ and $\bm{y}_{\mathcal{N}_n}$ is a vector containing the labels of node $n$ neighbors. Then, the joint pmf of all labels is
\begin{equation}
\label{eq:MRF_prob}
\prob(\bm{y}) = \frac{1}{Z}\text{exp}(-U(\bm{y}))
\end{equation}
where $Z := \sum_{\bm{y}}\text{exp}(-U(\bm{y}))$ is the normalization constant, and $U(\bm{y})$ is the so-called energy function. Computing the normalization constant $Z$ involves all possible configurations of $\bm{y}$; hence, it is intractable for datasets with moderate to large size $N$.  By the Hammersley-Clifford theorem the energy function can be written as~\cite{HammersleyClifford}
\begin{equation}
U(\bm{y}) = \frac{1}{2}\sum_{(n,n')\in\mathcal{E}}V(y_n,y_{n'})
\end{equation}
where $V(y_n,y_{n'})$ denotes the clique potential of the $(n,n')$-th edge, which will be defined in the following sections.

Similar to the i.i.d. and sequential cases, data vectors $\{x_n\}_{n=1}^{N}$ are generated from an unknown conditional pdf $x_n\sim\prob(x_n|y_n = k)$ that depends only on their corresponding label $y_n$. $M$ learners observe $\{x_n\}_{n=1}^{N}$ and provide estimates of their labels $f_m(x_n)$.
With As\ref{ass:annot_indep} still in effect, learner responses are conditionally independent given $Y$.

}

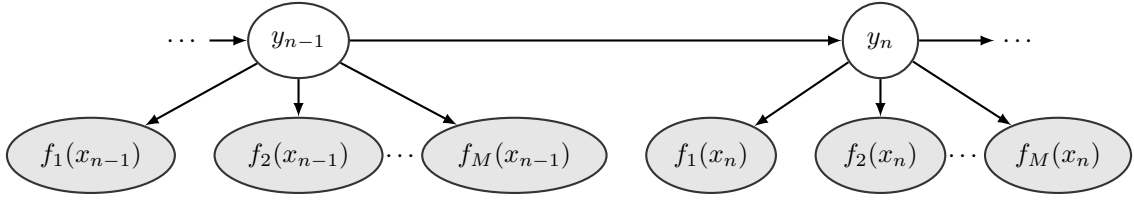
\begin{figure*}[tb]
	\centering
	\begin{tikzpicture}
	\tikzstyle{main}=[ellipse, minimum size = 10mm, thick, draw =black!80, node distance = 5mm]
	\tikzstyle{connect}=[-latex, thick]
	\tikzstyle{box}=[rectangle, draw=black!100]
	\node[box,draw=white!100] (beginpart) {\ldots};
	\node[main] (L1) [right=of beginpart] {$y_{n-1}$};

	
	\node[main,fill=black!10] (O12) [below=of L1] {$f_2(x_{n-1})$};
	\node[main,fill=black!10] (O11) [left=of O12] {$f_1(x_{n-1})$};
	\node[main,fill=black!10] (O1M) [right=of O12] {$f_M(x_{n-1})$};
	\path (O12) -- node[auto=false]{\ldots} (O1M);
	\node[main,fill=black!10] (O21) [right=of O1M] {$f_1(x_n)$};
	\node[main,fill=black!10] (O22) [right=of O21] {$f_2(x_n)$};
	\node[main,fill=black!10] (O2M) [right=of O22] {$f_M(x_n)$};
	\path (O22) -- node[auto=false]{\ldots} (O2M);
	\node[main] (L2) [above=of O22] {$y_n$};
	\node[box,draw=white!100] (endpart) [right=of L2] {\ldots};
	\path (beginpart) edge [connect] (L1);
	\path (L2) edge [connect] (endpart);
	\path (L1) edge [connect] (L2);
	\path (L1) edge [connect] (O11);
	\path (L1) edge [connect] (O12);
	\path (L1) edge [connect] (O1M);
	\path (L2) edge [connect] (O21);
	\path (L2) edge [connect] (O22);
	\path (L2) edge [connect] (O2M);
	\end{tikzpicture}
	\caption{Graphical representation of the proposed model for sequential data. Shaded ellipses indicate observed variables (here \textcolor{black}{learner} responses).}
	\label{fig:hmm_model}
\end{figure*}

\subsection{Prior work}\label{ssec:prior}
Most prior works on unsupervised ensemble classification focus on the i.i.d. data case.
Possibly the simplest scheme is majority voting, where the estimated label of a datum is the one that most \textcolor{black}{learner}s agree upon. Majority voting presumes that all \textcolor{black}{learner}s are equally ``reliable,'' which is rather unrealistic, both in crowdsourcing as well as in ensemble learning setups. Other unsupervised ensemble methods aim to estimate the parameters that characterize the \textcolor{black}{learner}s' performance, namely the \textcolor{black}{learner} confusion matrices. A popular approach is joint maximum likelihood (ML) estimation of the unknown labels and the aforementioned \textcolor{black}{confusion matrices} using the expectation-maximization (EM) algorithm~\cite{dawid1979maximum}. As EM iterations do not guarantee convergence to the ML solution, recent works pursue alternative estimation methods. For binary classification, \cite{EigenRatio,KOS}  \textcolor{black}{describe each learner with one parameter, its probability of providing a correct response, and attempts to learn this parameter.} Recently, \textcolor{black}{for binary classification,} \cite{jaffe2015estimating} \textcolor{black}{used the eigendecomposition of the learner crosscovariance matrix, to estimate the entries of the $2\times2$ confusion matrices of learners, while~\cite{minimaxopt} introduced a minimax optimal algorithm that can infer learner reliabilities.}. In the multiclass setting, spectral methods such as~\cite{zhang2014spectral,traganitis2018} utilize third-order moments and tensor decompositions to estimate the unknown reliability parameters. These estimates can then initialize the EM algorithm of~\cite{dawid1979maximum}. 

Recent works advocate unsupervised ensemble approaches for sequential data. A method to aggregate \textcolor{black}{learner} labels for sequential data relies on conditional random fields (CRFs) \cite{Rodrigues2014}. However, this method operates under strong and possibly less realistic assumptions requiring e.g., that only one \textcolor{black}{learner} provides the correct label for each datum. To relax the assumptions of~\cite{Rodrigues2014}, extensions of the standard Hidden Markov Model (HMM) to incorporate \textcolor{black}{learner} responses have been reported along with a variational EM~\cite{variational} algorithm to aggregate them~\cite{pmid_hmm_em}. As both aforementioned methods require tuning of hyperparameters, a  training step is necessary, which can be unrealistic in unsupervised settings. 
 
When features $\{x_n\}_{n=1}^{N}$ are available at the meta-learner, approaches based on Gaussian Processes can be used to classify the data based on \textcolor{black}{learner} responses~\cite{Yan2014,vgp,musicgenre_senpoldata}. These approaches can take advantage of some data dependencies, \textcolor{black}{as they rely on linear or nonlinear similarities between data};  however, in addition to requiring the data features at the meta-learner, these methods have only been developed for binary classification.

The present work puts forth a novel scheme for \emph{unsupervised ensemble classification in the presence of data dependencies}. Our approach builds upon our previous work on unsupervised ensemble classification of i.i.d. data~\cite{traganitis2018,traganitis2017learning}, and markedly extends its scope to handle sequential as well as networked data, without requiring training or access to data features. For sequentially dependent data case we present a moment matching algorithm that is able to estimate \textcolor{black}{learner} confusion matrices as well as the parameters characterizing the Markov chain of the labels. These confusion matrices and parameters are then refined using  EM iterations tailored for the sequential data classification task. For the network dependencies, the moment matching method for i.i.d. data is used to initialize an EM algorithm designed for networked data. To our knowledge, no existing work tackles the ensemble classification task under the networked data regime. Compared to our conference precursor in~\cite{traganitis_ensemble_dep_dsw}, here we have included extensive numerical tests as well as a new EM algorithm for sequential data along with algorithms that tackle networked data.

{\color{black}
\section{Unsupervised Ensemble Classification of i.i.d. data}
\label{sec:recap_CI}
Before introducing algorithms for sequential and networked data, we first outline moment matching and EM approaches for i.i.d. data. } If all \textcolor{black}{learner} confusion matrices were ideally known, the label of $x_n$ could be estimated using a maximum a posteriori (MAP) classifier. The latter finds the label $k$ that maximizes the joint probability of $y_n$ and observed \textcolor{black}{learner} responses $\{f_m(x_n)=k_m \}_{m=1}^{M}$
\begin{align}
	\label{eq:max_label_bayes_log}
	\hat{y}_n & = \underset{{k\in\{1,\ldots,K\}}}{\arg\max} {{L}}(x_n|k)\prob(y_n = k) \\
	& = \underset{{k\in\{1,\ldots,K\}}}{\arg\max} \log\pi_k + \sum_{m=1}^{M}\log(\Gamma_m(k_m,k)) \notag 
\end{align}
where $\pi_k := \prob(y_n = k) = \prob(Y = k)$ and ${L}(x_n|k):=\prob\left(f_1(x_n)=k_1,\ldots,f_M(x_n)=k_M|Y = k\right)$, and the second equality of \eqref{eq:max_label_bayes_log} follows from As\ref{ass:annot_indep} and properties of the logarithm. In addition, if all classes are assumed equiprobable, \eqref{eq:max_label_bayes_log} reduces to the maximum likelihood (ML) classifier. 

But even for non-equiprobable classes, unsupervised ensemble classification requires estimates of the class priors $\bm{\pi}:=[\pi_1,\ldots,\pi_K]^{\top}$ as well as all the \textcolor{black}{learner} performance parameters $\{ \mathbf{\Gamma}_m\}_{m=1}^{M}$. 

\subsection{EM algorithm for i.i.d. data}
\label{sssec:EM_iid}
Here we outline how the EM algorithm can be employed to estimate the wanted \textcolor{black}{learner} performance parameters by iteratively maximizing the log-likelihood of the observed \textcolor{black}{learner} responses; that is, $\log\prob(\mathbf{F} ; \bm{\theta})$,
where $\bm{\theta}$ collects all the \textcolor{black}{learner} confusion matrices (prior probabilities are assumed equal and are dropped for simplicity). Each EM iteration includes the expectation (E-)step and the maximization (M-)step. 

At the E-step of the $(i+1)$st iteration, the estimate $\bm{\theta}^{(i)}$ is given, and the so-termed Q-function is obtained as  
\begin{align}
& Q(\bm{\theta} ; \bm{\theta}^{(i)}) = \Expect_{\bm{y}|\mathbf{F};\bm{\theta}^{(i)}}[\log\prob(\bm{y},\mathbf{F};\bm{\theta})] \\ & = \Expect_{\bm{y}|\mathbf{F};\bm{\theta}^{(i)}}[\log\prob(\mathbf{F}|\bm{y};\bm{\theta})] + \Expect_{\bm{y}|\mathbf{F};\bm{\theta}^{(i)}}[\log\prob(\bm{y};\bm{\theta})] \notag.
\end{align}
Since the data are i.i.d., we have under As\ref{ass:annot_indep} that
\begin{align*}
 \Expect_{\bm{y}|\mathbf{F};\bm{\theta}^{(i)}}[\log\prob(\mathbf{F}|\bm{y};\bm{\theta})] =  \sum_{n=1}^{N}\sum_{k=1}^{K}\sum_{m=1}^{M}\log\Gamma_m(f_m(x_n),k)q_{nk}
\end{align*}
where $q_{nk} := 
\prob(y_n = k | \{ f_m(x_n) \}_{m=1}^{M}; \bm{\theta}^{(i)})$ is the posterior of label $y_n$ given the observed data and current parameters, and 
\begin{equation*}
\Expect_{\bm{y}|\mathbf{F};\bm{\theta}^{(i)}}[\log\prob(\bm{y};\bm{\theta})] = \sum_{n=1}^{N}\sum_{k=1}^{K}\log\prob(y_n = k;\bm{\theta})q_{nk} .
\end{equation*}
Under this model, it can be shown that~\cite{dawid1979maximum,zhang2014spectral}
\begin{align}
q_{nk}^{(i+1)} =  \frac{1}{Z}\exp\left(\sum_{m=1}^{M}\sum_{k'=1}^{K}\mathcal{I}(f_m(x_n) =  k')\log(\Gamma_m^{(i)}(k',k))\right) 
\end{align}
where $Z$ is the normalization constant.

At the M-step, \textcolor{black}{learner} confusion matrices are updated by maximizing the Q-function to obtain \begin{equation}
\label{eq:thetaupd_iid}
\bm{\theta}^{(i+1)} = \arg\max_{\bm{\theta}} Q(\bm{\theta} ; \bm{\theta}^{(i)}).
\end{equation}
It can be shown per \textcolor{black}{learner} $m$ that \eqref{eq:thetaupd_iid} boils down to
\begin{equation}
\label{eq:EM_gammaupd}
[\mathbf{\Gamma}_m^{(i+1)}]_{k'k} = \frac{\sum_{n=1}^{N}q_{nk}^{(i+1)}\mathcal{I}(f_m(x_n) = k')}{\sum_{k^{''}=1}^{K}\sum_{n=1}^{N}q_{nk}^{(i+1)}\mathcal{I}(f_m(x_n) = k^{''})}.
\end{equation} 
The E- and M-steps are repeated until convergence, and ML estimates of the labels 
are subsequently found as 
\begin{align*}
\hat{y}(x_n) & = \underset{k\in\{1,\ldots,K\}}{\arg\max} \prob(\{f_m(x_n)\}_{m=1}^{M}, y_n = k) \\
& = \underset{k\in\{1,\ldots,K\}}{\arg\max} q_{nk}.
\end{align*}
\subsection{Moment-matching for i.i.d. data}
\label{sssec:MM_iid}
As an alternative to EM, the annonator performance parameters can be estimated using the moment-matching method we introduced for i.i.d. data in~\cite{traganitis2018}, which we outline here before extending it to dependent data in the ensuing sections.

Consider representing label $k$ using the canonical $K\times 1$ vector $\bm{e}_k$, namely the $k$-th column of the $K\times K$ identity matrix $\mathbf{I}$. Let $\mathbf{f}_m(X)$ denote the $m$-th \textcolor{black}{learner}'s response in vector format. Since $\mathbf{f}_m(X)$ is just a vector representation of $f_m(X)$, it holds that $\prob\left(f_m(X) = k' | Y = k\right)\equiv\prob\left(\mathbf{f}_m(X) = \bm{e}_{k'} | Y = k\right)$. With $\bm{\gamma}_{m,k}$ denoting the $k$-th column of $\mathbf{\Gamma}_m$, it thus holds that
\begin{equation}
\label{eq:cond_exp_f_i}
\bm{\gamma}_{m,k} := \Expect[\mathbf{f}_m(X)|Y=k] =\sum_{k'=1}^{K} \bm{e}_{k'}\prob\left(f_m(X)=k'|Y=k\right)
\end{equation}
where we used the definition of conditional expectation.
Using \eqref{eq:cond_exp_f_i} and the law of total probability, the mean vector of responses from \textcolor{black}{learner} $m$, is hence 
\begin{equation}
\label{eq:mean_annotator_response}
\hspace{-0.8pt}\Expect[\mathbf{f}_m(X)]  =  \sum_{k=1}^{K} \Expect[\mathbf{f}_m(X)|Y=k]\prob\left(Y=k\right)  = \mathbf{\Gamma}_m\bm{\pi}.
\end{equation}
Upon defining the diagonal matrix $\mathbf{\Pi} := \text{diag}(\bm{\pi})$, the $K\times K$ cross-correlation matrix between the responses of \textcolor{black}{learner}s $m$ and $m'\neq m$, can be expressed as
\begin{alignat}{2}
& \mathbf{R}_{mm'} &&:= \Expect[\mathbf{f}_m(X)\mathbf{f}_{m'}^{\top}(X)] \notag\\& &&=
\sum_{k=1}^{K} \Expect[\mathbf{f}_m(X)|Y=k]\Expect[\mathbf{f}_{m'}^{\top}(X)|Y=k]\prob\left(Y=k\right) \notag
\\ & && =\mathbf{\Gamma}_m\text{diag}(\bm{\pi})\mathbf{\Gamma}_{m'}^{\top} = \mathbf{\Gamma}_m\mathbf{\Pi}\mathbf{\Gamma}_{m'}^{\top}
\label{eq:crossx_annotator_response}
\end{alignat}
where we successively relied on the law of total probability, As\ref{ass:annot_indep}, and \eqref{eq:cond_exp_f_i}. Consider now the $K\times K\times K$ cross-correlation tensor between the responses of \textcolor{black}{learner}s $m$, $m'\neq m$ and $m''\neq m', m$, namely
\begin{alignat}{2}
\label{eq:crossx_tensor}
\underline{\Psi}_{mm'm''} = \Expect[{\bf{f}}_m(X)\circ{\bf{f}}_{m'}(X)\circ{\bf{f}}_{m''}(X)].
\end{alignat}
It can be shown that $\underline{\Psi}_{mm'm''}$ obeys a PARAFAC model with factor matrices $\mathbf{\Gamma}_m,\mathbf{\Gamma}_{m'}$ and $\mathbf{\Gamma}_{m''}$~\cite{traganitis2018};
that is,
\begin{alignat}{2}
\label{eq:crossx_tensor2}
\underline{\Psi}_{mm'm''} &= \sum_{k=1}^{K}\pi_k\bm{\gamma}_{m,k}\circ\bm{\gamma}_{m',k}\circ\bm{\gamma}_{m'',k}  \\ &= [[\mathbf{\Gamma}_m\mathbf{\Pi},\mathbf{\Gamma}_{m'},\mathbf{\Gamma}_{m''}]]_K.\notag
\end{alignat}
Note here that the diagonal matrix $\mathbf{\Pi}$ can multiply any of the factor matrices $\mathbf{\Gamma}_m,\mathbf{\Gamma}_{m'}$, or, $\mathbf{\Gamma}_{m''}$.

Having available the sample average counterparts of \eqref{eq:mean_annotator_response}, \eqref{eq:crossx_annotator_response} and~\eqref{eq:crossx_tensor}, correspondingly $\{\bm{\mu}_m\}_{m=1}^{M}$, $\{\mathbf{S}_{mm'}\}_{m,m'=1}^{M}$, and $\{\underline{T}_{mm'm^{''}}\}_{m,m',m^{''}=1}^{M}$, estimates of $\{\mathbf{\Gamma}_m \}_{m=1}^{M}$ and $\bm{\pi}$ can be readily obtained. This approach is an instantiation of the method of moments estimation method, see e.g.~\cite{kay1993fundamentals_est}, and can be cast as the following constrained optimization task
\begin{alignat}{2}
\label{eq:ConfMat_Opt_crosscov}
& \underset{{\bm{\pi}\in\mathcal{C}_\pi,}{\{\mathbf{\Gamma}_m\in\mathcal{C} \}_{m=1}^{M}}}{\min} && g(\{\mathbf{\Gamma}_m \}_{m=1}^{M_{}},\bm{\pi}) 
\end{alignat}
where
\vspace{-8pt}
{\small \begin{alignat*}{2}
	& g(\{\mathbf{\Gamma}_m\},\bm{\pi}) :=  \sum_{m=1}^{M}\|\bm{\mu}_m - \mathbf{\Gamma}_m\bm{\pi}\|_2^2 \\ &+ \sum_{\underset{m'>m}{m=1}}^{M}\|\mathbf{S}_{mm'} - \mathbf{\Gamma}_m\mathbf{\Pi}{\mathbf{\Gamma}_{m'}^{\top}}\|_F^2 \\ 
	& + \sum_{\underset{{m'>m,m''>m'}}{m=1}}^{M}\|\underline{T}_{mm'm''} - [[\mathbf{\Gamma}_{m}\mathbf{\Pi},\mathbf{\Gamma}_{m'},\mathbf{\Gamma}_{m''}]]_K\|_F^2,
	\end{alignat*}}
\textcolor{black}{\noindent
	 $\mathcal{C} := \{\mathbf{\Gamma}\in\mathbb{R}^{K\times K}: \mathbf{\Gamma}\geq \bm{0}, \mathbf{\Gamma}^{\top}\bm{1} = \bm{1}\}$, is the convex set of matrices whose columns lie on a probability simplex, and $\mathcal{C}_{p} := \{\bm{u}\in\mathbb{R}^{K}: \bm{u}\geq \bm{0},\bm{u}^{\top}\bm{1} = 1\}$ denotes the simplex constraint for a $K\times 1$ vector.}
The non-convex optimization in \eqref{eq:ConfMat_Opt_crosscov} can be solved using the alternating optimization method described in~\cite{traganitis2018}, which is guaranteed to converge to a stationary point of $g$~\cite{aoadmm}. As\ref{ass:better_than_random} is used here to address the permutation ambiguity that is induced by the tensor decomposition of \eqref{eq:ConfMat_Opt_crosscov}. Interested readers are referred to \cite{traganitis2018} for implementation details, and theoretical guarantees.

Upon obtaining $\{\hat{\mathbf{\Gamma}}_m \}_{m=1}^{M}$ and  $\hat{\bm{\pi}}$, a MAP classifier can be subsequently employed to estimate the label for each datum; that is, for $n=1,\ldots,N$, we obtain
\begin{equation}
\label{eq:label_est}
\hat{y}_{}(x_n) = \underset{{k\in\{1,\ldots,K\}}}{\arg\max} \log\hat{\pi}_k + \sum_{m=1}^{M}\log\hat{{\Gamma}}_m({f_m(x_n),k})
\end{equation}
where $\hat{{\Gamma}}_m({k',k}) = [\hat{\mathbf{\Gamma}}_m]_{k'k}$, and $\hat{\pi}_k = [\hat{\bm{\pi}}]_k$.
The estimates $\{\hat{\mathbf{\Gamma}}_m \}_{m=1}^{M}, \hat{\bm{\pi}}, \{\hat{y}_n\},$ can be improved using the EM iterations in Sec.~\ref{sssec:EM_iid}. Such a refinement is desirable when $N$ is relatively small, and thus moment estimates are not as reliable.

Next, we will introduce our novel approaches for ensemble classification with sequential and networked data.

\section{Sequentially Dependent Data}
\label{sec:seq}
{\color{black} Having recapped moment matching and EM approaches for i.i.d. data, we now turn our attention to the sequential data case. Recall from Sec.~\ref{sec:prelim} that we postulate labels $\bm{y}$ forming a one-step time-homogeneous Markov chain, characterized by the transition matrix $\mathbf{T}\in\mathcal{C}$, and that learner responses obey As\ref{ass:annot_indep}. The labels $\bm{y}$ along with learner responses $\mathbf{F}$, form an HMM. As with the i.i.d. case, here we develop both moment-matching and EM algorithms tailored for sequential data. }

\subsection{Label estimation for sequential data}
\label{ssec:Viterbi}
Given only \textcolor{black}{learner} responses for all data in a sequence, an approach to estimating the labels of each datum, meaning the hidden variables of the HMM, is to find the sequence $\bm{k}$ that maximizes the joint probability of the labels  $\bm{y}$ and the \textcolor{black}{learner} responses $\mathbf{F}$, namely
\begin{align}
	& \prob\left(\bm{y} = \bm{k},\mathbf{F}\right) \notag \\
	& = \prob(y_1 = k_1)\prod_{n=2}^{N}T(k_n,k_{n-1})\prod_{m=1}^{M}\Gamma_m(f_m(x_n),k_n)\label{eq:label_est_seq}
\end{align}
where the equality is due to \eqref{eq:markovchain} and As\ref{ass:annot_indep}.
This can be done efficiently using the Viterbi algorithm~\cite{hmmtutor,viterbi}.
In order to obtain estimates of the labels, $\{\mathbf{\Gamma}_m\}_{m=1}^{M}$ and $\mathbf{T}$ must be available. The next subsections will show that $\{\mathbf{\Gamma}_m\}_{m=1}^{M}$ and $\mathbf{T}$ can be recovered by the \textcolor{black}{learner responses, using moment matching and/or EM approaches.}

\subsection{\textcolor{black}{Moment matching for sequential data}}
\label{ssec:Tmatest}
{\color{black} 
Under the sequential data model outlined earlier, we require an additional assumption before presenting our moment matching algorithm. 
\renewcommand{\labelenumi}{\textbf{As\arabic{enumi}.}}
\begin{enumerate}[wide, labelwidth=!, labelindent=0pt]
	\setcounter{enumi}{2}
	\item \label{ass:mixing}
	The Markov chain formed by the labels $\{y_n\}$ has a unique stationary distribution $\bm{\pi} := [\pi_1,\ldots,\pi_K]^{\top} = [\prob(Y=1),\ldots,\prob(Y=K)]^{\top}$, and is also irreducible.
\end{enumerate}
Similar to~\cite{kontorovich13}, this assumption enables decoupling the problem of learning the parameters of interest in two steps. First, estimates of the confusion matrices $\{\hat{\mathbf{\Gamma}}_m \}_{m=1}^{M}$ and stationary priors $\hat{\bm{\pi}}$ are obtained; and subsequently, the transition matrix is estimated as $\hat{\mathbf{T}}$ before obtaining an estimate of the labels $\{\hat{y}_n\}_{n=1}^{N}$.

}
Under As\ref{ass:mixing}, the HMM is mixing and assuming that $y_0$ is drawn from the stationary distribution $\bm{\pi}$,  the responses of a \textcolor{black}{learner} $m$ can be considered to be generated from a mixture model, see e.g.~\cite{kontorovich13}
\begin{equation}
f_m(X)\sim  \sum_{k=1}^{K}\pi_k\prob(f_m(X)|Y=k)\;.
\end{equation}
Based on the latter, the remainder of this subsection will treat labels $\{y_n\}_{n=1}^{N}$, as if they had been drawn i.i.d. from the stationary distribution $\bm{\pi}$, that is $y_n\sim\bm{\pi}$ for $n=1,\ldots,N$. Then, the procedure outlined in Sec.~\ref{sssec:MM_iid} can be readily adopted to obtain estimates of the stationary distribution $\hat{\bm{\pi}}$ and the confusion matrices $\{\hat{\mathbf{\Gamma}}_m \}_{m=1}^{M}$.

With estimates of \textcolor{black}{learner} confusion matrices $\{\hat{\mathbf{\Gamma}}_m\}$ and stationary probabilities $\hat{\bm{\pi}}$ at hand, we turn our attention to the estimation of the transition matrix $\mathbf{T}$. To this end, consider the cross-correlation matrix of consecutive vectorized observations between \textcolor{black}{learner}s $m$ and $m'$, namely $\tilde{\mathbf{R}}_{mm'} = \Expect[\mathbf{f}_m(x_n)\mathbf{f}_{m'}^{\top}(x_{n-1})]$. Under the HMM of Sec.~\ref{sec:seq}, the latter is given by
\begin{equation}
	\label{eq:crossx_T}
	\tilde{\mathbf{R}}_{mm'} = \mathbf{\Gamma}_m\mathbf{T}\text{diag}(\bm{\pi})\mathbf{\Gamma}_{m'}^{\top} = \mathbf{\Gamma}_m\mathbf{A}\mathbf{\Gamma}_{m'}^{\top}
\end{equation}
where  $\mathbf{A}:=\mathbf{T}\text{diag}(\bm{\pi})$. Letting $\tilde{\mathbf{S}}_{mm'}$ denote the sample counterpart of \eqref{eq:crossx_T},  and with $\{\hat{\mathbf{\Gamma}}_m \}_{m=1}^{M}$ available, we can recover $\mathbf{T}$ as follows. First, we solve the convex moment-matching optimization problem
\begin{alignat}{2}
	\label{eq:T_Opt_crosscov}
	& \underset{\mathbf{A}\in\mathcal{C}_S}{\min} && \sum_{\underset{m'>m}{m=1}}^{M}\|\tilde{\mathbf{S}}_{mm'} - \hat{\mathbf{\Gamma}}_m\mathbf{A}\hat{\mathbf{\Gamma}}_{m'}^{\top}\|_F^2 
\end{alignat}
where $\mathcal{C}_S$ is the set of matrices whose entries are positive and sum to $1$, namely $\mathcal{C}_S := \{\mathbf{X}\in\mathbb{R}^{K\times K}: \mathbf{X}\geq \bm{0}, \bm{1}^{\top}\mathbf{X}\bm{1} = 1 \}$.
The constraint is due to the fact that $\bm{1}^{\top}\mathbf{T} = \bm{1}^{\top}$, $\text{diag}(\bm{\pi})\bm{1} = \bm{\pi}$, and $\bm{\pi}^{\top}\bm{1} = 1$.
Note that \eqref{eq:T_Opt_crosscov} is a standard constrained convex optimization problem that can be solved with off-the-shelf tools, such as CVX~\cite{cvx}. Having obtained  $\hat{\mathbf{A}}$ from \eqref{eq:T_Opt_crosscov}, we can then estimate the transition matrix as
\begin{equation}
	\label{eq:T_obt}
	\hat{\mathbf{T}} = \hat{\mathbf{A}}(\text{diag}(\hat{\bm{\pi}}))^{-1}.
\end{equation}
Note here that explicit knowledge of $\bm{\pi}$ is not required, as its estimate can be recovered from $\hat{\mathbf{A}}$ as
\begin{equation*}
\hat{\bm{\pi}}^\top = \bm{1}^{\top}\hat{\mathbf{A}} = \bm{1}^{\top}\hat{\mathbf{T}}\text{diag}(\hat{\bm{\pi}}) = \bm{1}^{\top}\text{diag}(\hat{\bm{\pi}}).
\end{equation*}
The following proposition argues the consistency of the transition matrix estimates $\hat{\mathbf{T}}$.

\begin{proposition}
	\label{prop:T_est}
	Given accurate estimates of $\{\mathbf{\Gamma}_m \}$ and $\bm{\pi}$, the estimate $\hat{\mathbf{T}}$ given by \eqref{eq:T_Opt_crosscov} and \eqref{eq:T_obt} approaches $\mathbf{T}$ as $N\rightarrow\infty$.
\end{proposition}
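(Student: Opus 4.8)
The plan is to separate the two sources of approximation in $\hat{\mathbf{T}}$ and handle them in sequence. Given accurate confusion matrices and priors, the only finite-sample quantity entering \eqref{eq:T_Opt_crosscov}--\eqref{eq:T_obt} is the sample cross-correlation $\tilde{\mathbf{S}}_{mm'} = \frac{1}{N-1}\sum_{n=2}^{N}\mathbf{f}_m(x_n)\mathbf{f}_{m'}^{\top}(x_{n-1})$. First I would establish its convergence to the population counterpart. Under As\ref{ass:mixing} the label chain is irreducible with a unique stationary distribution, hence ergodic; appending the conditionally independent learner emissions (bounded functions of the states) keeps the enlarged process ergodic. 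By the ergodic theorem, the time average $\tilde{\mathbf{S}}_{mm'}$ converges to the stationary expectation $\tilde{\mathbf{R}}_{mm'} = \mathbf{\Gamma}_m\mathbf{A}^{*}\mathbf{\Gamma}_{m'}^{\top}$ with $\mathbf{A}^{*} := \mathbf{T}\,\text{diag}(\bm{\pi})$, as derived in \eqref{eq:crossx_T}.

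Next I would show that the minimizer $\hat{\mathbf{A}}$ of \eqref{eq:T_Opt_crosscov} converges to $\mathbf{A}^{*}$. At the population level, substituting the accurate $\mathbf{\Gamma}_m$ and $\tilde{\mathbf{R}}_{mm'}$ makes every summand vanish at $\mathbf{A}=\mathbf{A}^{*}$, and $\mathbf{A}^{*}\in\mathcal{C}_S$ by the simplex identities noted after \eqref{eq:T_Opt_crosscov}, so the limiting objective attains its global minimum value of zero there. The crucial identifiability step is uniqueness of this minimizer: when the confusion matrices are full rank, $\mathbf{\Gamma}_m\mathbf{A}\mathbf{\Gamma}_{m'}^{\top} = \mathbf{\Gamma}_m\mathbf{A}'\mathbf{\Gamma}_{m'}^{\top}$ forces $\mathbf{A}=\mathbf{A}'$ (left- and right-multiply by $\mathbf{\Gamma}_m^{-1}$ and $(\mathbf{\Gamma}_{m'}^{\top})^{-1}$), so the map $\mathbf{A}\mapsto(\mathbf{\Gamma}_m\mathbf{A}\mathbf{\Gamma}_{m'}^{\top})_{m'>m}$ is injective and $\mathbf{A}^{*}$ is the unique global minimizer. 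Since \eqref{eq:T_Opt_crosscov} is a convex quadratic over the compact convex set $\mathcal{C}_S$ whose coefficients depend continuously on $\tilde{\mathbf{S}}_{mm'}$, a standard stability-of-minimizers (M-estimation consistency) argument then transfers $\tilde{\mathbf{S}}_{mm'}\to\tilde{\mathbf{R}}_{mm'}$ into $\hat{\mathbf{A}}\to\mathbf{A}^{*}$.

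Finally I would invoke the continuous mapping theorem through \eqref{eq:T_obt}. Irreducibility guarantees $\pi_k>0$ for all $k$, so $\text{diag}(\bm{\pi})$ is invertible and $\mathbf{A}\mapsto\mathbf{A}\,(\text{diag}(\bm{\pi}))^{-1}$ is continuous at $\mathbf{A}^{*}$; with the accurate $\hat{\bm{\pi}}=\bm{\pi}$ this yields $\hat{\mathbf{T}} = \hat{\mathbf{A}}\,(\text{diag}(\hat{\bm{\pi}}))^{-1}\to\mathbf{A}^{*}(\text{diag}(\bm{\pi}))^{-1} = \mathbf{T}\,\text{diag}(\bm{\pi})(\text{diag}(\bm{\pi}))^{-1} = \mathbf{T}$. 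I expect the main obstacle to be the identifiability/argmin-continuity step of the second paragraph: one must pin down the rank condition on the confusion matrices (or, equivalently, the minimal number of diverse learner pairs) that makes the linear map injective, and must argue stability of the constrained minimizer despite $\mathcal{C}_S$ possessing a boundary, whereas the ergodic convergence and the closing continuous-mapping argument are comparatively routine.
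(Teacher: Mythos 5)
Your proof is correct and follows the same three-step skeleton as the paper's own proof: (i) convergence of the sample cross-correlations $\tilde{\mathbf{S}}_{mm'}$ to $\tilde{\mathbf{R}}_{mm'}$, (ii) consistency of the minimizer $\hat{\mathbf{A}}$ of \eqref{eq:T_Opt_crosscov}, and (iii) the closed-form recovery \eqref{eq:T_obt}. Where you differ is in how steps (i) and (ii) are justified, and in both places your version is the more careful one. The paper invokes ``the law of large numbers'' for step (i); since consecutive samples are dependent, what is actually needed is the ergodic theorem for functions of consecutive states of the label chain (with the conditionally independent learner emissions appended), which As\ref{ass:mixing} licenses --- exactly your argument. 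More substantively, for step (ii) the paper deduces $\hat{\mathbf{A}}\rightarrow\mathbf{A}$ from convexity of the objective together with a citation to Vapnik; but convexity alone does not guarantee that the limiting (population) objective has a \emph{unique} minimizer, which is precisely the identifiability your second paragraph supplies via the full-rank condition on the confusion matrices, i.e., injectivity of $\mathbf{A}\mapsto\{\mathbf{\Gamma}_m\mathbf{A}\mathbf{\Gamma}_{m'}^{\top}\}_{m'>m}$, followed by a standard stability-of-minimizers argument over the compact set $\mathcal{C}_S$. That rank (or injectivity) condition is implicitly assumed but never stated in the paper, so the obstacle you flag is genuine: without it, the limiting objective can have a flat set of minimizers and the argmin-convergence step fails. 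Your closing continuous-mapping step, noting that irreducibility gives $\pi_k>0$ so that $\mathrm{diag}(\bm{\pi})$ is invertible, likewise makes explicit what the paper compresses into ``recovered in closed form.'' In short: same route as the paper, but your proof identifies and fills the two gaps in the published argument, at the modest price of adding the rank hypothesis on $\{\mathbf{\Gamma}_m\}$ to the proposition's premises.
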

\begin{proof}[Proof]
	By the law of large numbers, $\tilde{\mathbf{S}}_{mm'} \rightarrow \tilde{\mathbf{R}}_{mm'}$ as $N\rightarrow\infty$ for all $m,m'$. Since the objective in \eqref{eq:T_Opt_crosscov} is convex, from \cite{vapnik2013nature}, we have that $\hat{\mathbf{A}}$ will converge to $\mathbf{A} = \mathbf{T}\text{diag}(\bm{\pi})$ as $N\rightarrow \infty$. Finally, as $\hat{\mathbf{T}}$ can be recovered from $\hat{\mathbf{A}}$ in closed form [cf.~\eqref{eq:T_obt}], the proof is complete. 
\end{proof}

With estimates of $\{\hat{\mathbf{\Gamma}}_m\}, \hat{\bm{\pi}}$ and $\hat{\mathbf{T}}$ at hand, estimates of the labels $\{y_n\}_{n=1}^{N}$ can be obtained using the method described in Sec.~\ref{ssec:Viterbi}. Futhermore, the estimates of $\{\hat{\mathbf{\Gamma}}_m\}, \hat{\bm{\pi}}$ and $\hat{\mathbf{T}}$ can be used to initialize an EM algorithm (a.k.a. Baum-Welch), whose details are provided in the next subsection.

\begin{algorithm}[tb]
	\begin{algorithmic}[1]
		\algrenewcommand\algorithmicindent{1em}
		
		\Require{\textcolor{black}{Learner} responses $\{f_m(x_n)\}_{m=1,n=1}^{M,N}$, initial estimates $\mathbf{T}^{(0)}, \{\mathbf{\Gamma}_m^{(0)}  \}_{m=1}^{M}$.}
		\Ensure{Estimates $\hat{\mathbf{T}},\{ \hat{\mathbf{\Gamma}}_m\}_{m=1}^{M}$. } 
		\While{not converged}
		\State\parbox[t]{\dimexpr\linewidth-3\dimexpr\algorithmicindent}{Estimate $\hat{q}_{nk}^{(i+1)}$ and $\hat{\xi}_{n}^{(i+1)}(k,k')$ using the forward-backward algorithm (App.~\ref{app:fwdbwd}).}
		\State\parbox[t]{\dimexpr\linewidth-3\dimexpr\algorithmicindent}{Estimate $\{\hat{\mathbf{\Gamma}}_{m}^{(i+1)}\}_{m=1}^{M}$ via \eqref{eq:BW_Gamma_upd}.}
		\State\parbox[t]{\dimexpr\linewidth-3\dimexpr\algorithmicindent}{Estimate $\hat{\mathbf{T}}^{(i+1)}$ via \eqref{eq:BW_T_upd}.}
		\State\parbox[t]{\dimexpr\linewidth-3\dimexpr\algorithmicindent}{$i\leftarrow i + 1$}
		\EndWhile
	\end{algorithmic}
	\caption{EM algorithm for Sequential Data}\label{alg:BaumWelch} 
\end{algorithm}

\begin{algorithm}[tb]
	\begin{algorithmic}[1]
		\algrenewcommand\algorithmicindent{1em}
		
		\Require{\textcolor{black}{Learner} responses $\{f_m(x_n)\}_{m=1,n=1}^{M,N}.$}
		\Ensure{Estimates of data labels $\{\hat{y}_n\}_{n=1}^{N}$. } 
		
		\State\parbox[t]{\dimexpr\linewidth-3\dimexpr\algorithmicindent}{Estimate $\bm{\pi},\{\mathbf{\Gamma}_{m}\}_{m=1}^{M}$ via \eqref{eq:ConfMat_Opt_crosscov}.}
		\State\parbox[t]{\dimexpr\linewidth-3\dimexpr\algorithmicindent}{Estimate $\hat{\mathbf{T}}$ via \eqref{eq:T_Opt_crosscov} and \eqref{eq:T_obt}.}
		\State\parbox[t]{\dimexpr\linewidth-3\dimexpr\algorithmicindent}{Estimate $\hat{y}_n$  using the Viterbi algorithm [cf.~Sec.~\ref{ssec:Viterbi}].}
		\State\parbox[t]{\dimexpr\linewidth-3\dimexpr\algorithmicindent}{If needed refine estimates of $\hat{\mathbf{T}},\{\hat{\mathbf{\Gamma}}_m \}$ and $\{\hat{y}_n\}$ using Alg.~\ref{alg:BaumWelch}.}
	\end{algorithmic}
	\caption{Unsupervised Ensemble Classifier for Sequential Data}\label{alg:ensemble_classification} 
\end{algorithm}

\begin{remark}
	While here we employed the algorithm of \cite{traganitis2018} to estimate $\{\mathbf{\Gamma}_m \}$, any other unsupervised ensemble classification algorithm, such as~\cite{dawid1979maximum,zhang2014spectral}, can be utilized too. In addition, methods that jointly estimate confusion matrices and Markov chain parameters such as~\cite{xiaokejun}, can also be appropriately modified for the ensemble classification task.
\end{remark}
\subsection{EM algorithm for sequential data}
\label{ssec:BW_alg}
As with the i.i.d. case of Sec.~\ref{sec:recap_CI}, the EM algorithm developed here iteratively maximizes the log-likelihood of the observed \textcolor{black}{learner} responses. In order to update the parameters of interest $\bm{\theta} := {\rm vec}([\mathbf{T},\mathbf{\Gamma}_1,\ldots,\mathbf{\Gamma}_M])$ per iteration, the following two quantities have to be found
\begin{equation}
q_{nk} = \prob(y_n = k|\mathbf{F},\bm{\theta})
\end{equation}
and
\begin{equation}
\xi_{n}(k,k') = \prob(y_n = k, y_{n+1} = k' |\mathbf{F},\bm{\theta} )\;.
\end{equation}
Luckily, due to the causal structure of $\prob(\bm{y})$, the aforementioned quantities can be estimated efficiently using the forward-backward algorithm~\cite{hmmtutor}, whose details can be found in Appendix A of the supplementary material. 

At iteration $i$, after obtaining $q_{nk}^{(i+1)},\xi_{n}^{(i+1)}(k,k')$ for $k,k'=1,\ldots,K$ and $n=1,\ldots,N$, via the forward-backward algorithm, the transition and confusion matrix estimates can be updated as
\begin{align}
\label{eq:BW_T_upd}[\hat{\mathbf{T}}^{(i+1)}]_{k'k} & = \frac{\sum_{n=1}^{N-1}\xi_n^{(i+1)}(k',k)}{\sum_{n=1}^{N-1}q_{nk'}^{(i+1)}} \\ 
[\hat{\mathbf{\Gamma}}_m^{(i+1)}]_{k'k} & = \frac{\sum_{n=1}^{N}q_{nk}^{(i+1)}\mathcal{I}(f_m(x_n) = k')}{\sum_{k^{''}=1}^{K}\sum_{n=1}^{N}q_{nk}^{(i+1)}\mathcal{I}(f_m(x_n) = k^{''})}.\label{eq:BW_Gamma_upd} 
\end{align}
The EM iterations for sequential data with dependent labels is summarized in Alg.~\ref{alg:BaumWelch}, while the overall ensemble classifier for sequential data is tabulated in Alg.~\ref{alg:ensemble_classification}. 
\textcolor{black}{ Note that the EM algorithm of this subsection does not rely on As\ref{ass:mixing}.}

\section{Network Dependent Data}
\label{sec:gen}
{\color{black}To tackle the networked data case, this section will introduce our novel approach to unsupervised ensemble classification of networked data. Given a graph $\mathcal{G}$ encoding data dependencies, recall from Sec.~\ref{sec:prelim} that the joint pmf of all labels follows an MRF; thus, $\prob(\bm{y}) = \frac{1}{Z}\text{exp}(-U(\bm{y}))$ with 
\begin{equation*}
U(\bm{y}) = \frac{1}{2}\sum_{(n,n')\in\mathcal{E}}V(y_n,y_{n'})
\end{equation*}
where $V(y_n,y_{n'})$ is the clique potential of the $(n,n')$-th edge. 
 }
Here, we select the clique potential as
\begin{equation}
V(y_n,y_{n'}) := \begin{cases}
0 & \text{ if } y_n = y_{n'} \\
\delta_{n n'} & \text{ if } y_n \neq y_{n'}
\end{cases},
\end{equation}
where $\delta_{n n'} > 0$ is some predefined scalar, which controls how much we trust the given graph $\mathcal{G}$.
The local energy at node (datum) $n$ of the graph is then defined as
\begin{equation}
U_n(y_n) = \frac{1}{2}\sum_{n'\in\mathcal{N}_n}V(y_n,y_{n'}).
\end{equation}
This particular choice of clique potentials forces neighboring nodes (data) of the graph to have similar labels, and has been successfully used in image segmentation~\cite{MF,ICM}.
{\color{black}
Under As\ref{ass:annot_indep} and the aforementioned model, the joint pmf of label $y_n$ and corresponding \textcolor{black}{learner} responses $\{f_m(x_n)\}_{m=1}^{M}$ given the neighborhood $\bm{y}_{\mathcal{N}_n}$ of node $n$, can be expressed as
\begin{align}
& \prob\left(\{ f_m(x_n)\}_{m=1}^{M}, y_n = k | \bm{y}_{\mathcal{N}_n} = \bm{k}_{\mathcal{N}_n}\right)  \\ & =  \prod_{m=1}^{M}\Gamma_m(f_m(x_n),k)\prob(y_n = k|\bm{y}_{\mathcal{N}_n} = \bm{k}_{\mathcal{N}_n}) \notag
\end{align}
and accordingly the posterior probability of label $y_n$ as
\begin{align}
\label{eq:poster_MRF}
& \prob\left(y_n=k|\{f_m(x_n)\}_{m=1}^{M}, \bm{y}_{\mathcal{N}_n} = \bm{k}_{\mathcal{N}_n}\right) \\
& \propto  \prod_{m=1}^{M}\Gamma_m(f_m(x_n),k)\prob(y_n = k|\bm{y}_{\mathcal{N}_n} = \bm{k}_{\mathcal{N}_n}) \notag \\
& = \exp\left(-U_n(k) + \sum_{m=1}^{M}\log\Gamma_m(f_m(x_n),k)\right). \notag
\end{align}
}

\subsection{Label estimation for networked data}
\label{ssec:MRF_label_est}
Finding ML estimates of the labels $\hat{\bm{y}}$, under the aforementioned model, involves the following optimization problem
\begin{align}
\label{eq:MRF_MAP}
\hat{\bm{y}} & = \underset{\bm{y}}{\arg\max} \prob(\mathbf{F},\bm{y}) = \underset{\bm{y}}{\arg\max} \prob(\mathbf{F} | \bm{y})\prob(\bm{y}) \\
& = \underset{\bm{y}}{\arg\max}  \frac{1}{Z}\text{exp}(-U(\bm{y})) \prob(\mathbf{F} | \bm{y}). \notag
\end{align}
Unfortunately, \eqref{eq:MRF_MAP} is intractable even for relatively small $N$, due to the structure of \eqref{eq:MRF_prob}. This motivates well approximation techniques to obtain estimates of the labels. 

Popular approximation methods include Gibbs sampling~\cite{gibbs} and mean-field approximations~\cite{MF}. Here, we opted for an iterative method called iterated conditional modes (ICM), which has been used successfully in image segmentation~\cite{ICM}. Per ICM iteration, we are given estimates $\{\hat{\mathbf{\Gamma}}_m\}_{m=1}^{M}$, and update the label of datum $n$ by finding the $k$ maximizing its local posterior probability; that is,
\begin{align}
\tilde{y}_n^{(t)} & = \underset{k\in\{1,\ldots,K\}}{\arg\max}\prob\left(y_n=k|\{f_m(x_n) \}_{m=1}^{M}, \tilde{\bm{y}}_{\mathcal{N}_n}^{(t-1)}\right) \notag \\
& = \underset{k\in\{1,\ldots,K\}}{\arg\min}U_n(k) - \sum_{m=1}^{M}\log\left(\hat{\Gamma}_{m}(f_m(x_n),k)\right) \label{eq:ICM_basic}
\end{align} 
where the superscript denotes the iteration index, $\tilde{\bm{y}}_{\mathcal{N}_n}$ denotes the label estimates provided by the previous ICM iteration, and the second equality is due to \eqref{eq:poster_MRF}. The optimization in \eqref{eq:ICM_basic} is carried out for $n=1,\ldots,N$ until the values of $\tilde{\bm{y}}$ have converged or until a maximum number of iterations $T_{\rm max}$ has been reached.

The next subsection puts forth an EM algorithm for estimating $\{\hat{y}_n\}_{n=1}^{N}$ and $\{\hat{\mathbf{\Gamma}}_m \}_{m=1}^{M}$.

\subsection{EM algorithm for networked data}
\label{ssec:MRF_EM}
As with the i.i.d. case in Sec.~\ref{sec:recap_CI} and the sequential case in Sec.~\ref{sec:seq}, the EM algorithm of this section seeks to iteratively maximize the  marginal log-likelihood of observed \textcolor{black}{learner} responses. However, the Q-function [cf. Sec.~\ref{sssec:EM_iid}] is now cumbersome to compute under the MRF constraint on $\bm{y}$. 

For this reason, we resort to the approximation technique of the previous subsection to compute estimates of $q_{nk} = \prob(y_n = k | \{f_m(x_n)\}_{m=1}^{M}; \bm{\theta})$. Specifically, per EM iteration $i$, we let $\hat{\bm{y}}^{(i)} := [\hat{y}_1^{(i)},\ldots,\hat{y}_N^{(i)}]$ denote the estimates obtained by the iterative procedure of Sec.~\ref{ssec:MRF_label_est}. Then, estimates $\hat{q}_{nk}^{(i+1)}$ are obtained as [cf.~\ref{eq:poster_MRF}]
\begin{align}
\label{eq:MRF_q}
& \hat{q}_{nk}^{(i+1)} \\ & = \frac{1}{Z'}\exp\left(-U_{n}^{(i+1)}(k) + \sum_{m=1}^{M}\log\left(\hat{\Gamma}_{m}^{(i)}(f_m(x_n),k)\right)\right) \notag
\end{align}
where
\begin{equation*}
Z' = \sum_{k}\exp\left(-U_{n}^{(i+1)}(k) + \sum_{m=1}^{M}\log\left(\hat{\Gamma}_{m}^{(i)}(f_m(x_n),k)\right)\right)
\end{equation*} 
is the normalization constant, and $U_n^{(i+1)}(k)$ is given by 
\begin{equation}
U_n^{(i+1)}(k) = \frac{1}{2}\sum_{n'\in\mathcal{N}_n}V(k,\hat{y}_{n'}^{(i+1)}).
\end{equation}
Finally, the M-step that involves finding estimates of $\{\mathbf{\Gamma}_m\}_{m=1}^{M}$ is identical to the M-step of the EM algorithm of Sec.~\ref{sssec:EM_iid} for i.i.d. data; that is, 
\begin{equation}
\label{eq:MRF_gammaupd}
[\hat{\mathbf{\Gamma}}_m^{(i+1)}]_{k'k} = \frac{\sum_{n=1}^{N}\hat{q}_{nk}^{(i+1)}\mathcal{I}(f_m(x_n) = k')}{\sum_{k^{''}=1}^{K}\sum_{n=1}^{N}\hat{q}_{nk}^{(i+1)}\mathcal{I}(f_m(x_n) = k^{''})}.
\end{equation} 

\begin{algorithm}[tb]
	\begin{algorithmic}[1]
		\algrenewcommand\algorithmicindent{1em}
		
		\Require{\textcolor{black}{Learner} responses $\{f_m(x_n)\}_{m=1,n=1}^{M,N}$, initial $\bm{y}^{(0)}, \{ \mathbf{\Gamma}_m^{(0)} \}_{m=1}^{M}$, Data graph $\mathcal{G}(\mathcal{V},\mathcal{E}).$}
		\Ensure{Estimates of data labels $\{\hat{y}_n\}_{n=1}^{N}.$ } 
		
		\While{ not converged }
		\While{ not converged AND $t < T_{\rm max}$ }
		\For {$n=1,\ldots,N$}
		\State\parbox[t]{\dimexpr\linewidth-3\dimexpr\algorithmicindent}{Update $\tilde{y}_n^{(t)}$ using \eqref{eq:ICM_basic}.}
		\EndFor
		\State\parbox[t]{\dimexpr\linewidth-3\dimexpr\algorithmicindent}{$t \leftarrow t + 1$}
		\EndWhile
		\State\parbox[t]{\dimexpr\linewidth-3\dimexpr\algorithmicindent}{Compute $\hat{q}_{nk}^{(i+1)}$ using \eqref{eq:MRF_q}.}
		\State\parbox[t]{\dimexpr\linewidth-3\dimexpr\algorithmicindent}{Compute $\{ \hat{\mathbf{\Gamma}}_m^{(i+1)}\}_{m=1}^{M}$ using \eqref{eq:MRF_gammaupd}.}
		\State\parbox[t]{\dimexpr\linewidth-3\dimexpr\algorithmicindent}{$i \leftarrow i + 1$}
		\EndWhile
	\end{algorithmic}
	\caption{EM algorithm for networked data}\label{alg:MRF_EM} 
\end{algorithm}

Similar to the i.i.d. case, the aforementioned EM solver deals with a non-convex problem. In addition, the ICM method outlined in Sec.~\ref{ssec:MRF_label_est} is a deterministic approach that performs greedy optimization. Therefore, proper initialization is crucial for obtaining accurate estimates of the labels and \textcolor{black}{learner} confusion matrices. 

As with the decoupling approach of Sec.~\ref{sec:seq}, here we first obtain estimates of \textcolor{black}{learner} confusion matrices $\{\hat{\mathbf{\Gamma}}_m \}_{m=1}^{M}$ and labels $\hat{\bm{y}}$, using the moment-matching algorithm of Sec.~\ref{sssec:MM_iid}. These values are then provided as initialization to Alg.~\ref{alg:MRF_EM}.
In cases where $N$ is small to have accurate moment estimates, majority voting can be used instead to initialize Alg.~\ref{alg:MRF_EM}.
The entire procedure for unsupervised ensemble classification with networked data is tabulated in Alg. \ref{alg:ensemble_class_gendep}.

The next section will evaluate the performance of our proposed schemes.
{\begin{algorithm}[tb]
		\begin{algorithmic}[1]
			\algrenewcommand\algorithmicindent{1em}
			
			\Require{\textcolor{black}{Learner} responses $\{f_m(x_n)\}_{m=1,n=1}^{M,N}$, Data graph $\mathcal{G}(\mathcal{V},\mathcal{E})$}
			\Ensure{Estimates of data labels $\{\hat{y}_n\}_{n=1}^{N}$ } 
			
			\State\parbox[t]{\dimexpr\linewidth-3\dimexpr\algorithmicindent}{Estimate initial values of $\{\mathbf{\Gamma}_{m}\}_{m=1}^{M}$ via \eqref{eq:ConfMat_Opt_crosscov}.}
			\State\parbox[t]{\dimexpr\linewidth-3\dimexpr\algorithmicindent}{Estimate initial values of $\{ \hat{y}_n \}_{n=1}^{N}$  using \eqref{eq:label_est}.}
			\State\parbox[t]{\dimexpr\linewidth-3\dimexpr\algorithmicindent}{Refine estimates of $\{ \hat{y}_n \}_{n=1}^{N}$ and $ \{\hat{\mathbf{\Gamma}}_m  \}_{m=1}^{M}$ using Alg.~\ref{alg:MRF_EM}.}
		\end{algorithmic}
		\caption{Unsupervised Ensemble Classifier for Networked data}\label{alg:ensemble_class_gendep} 
\end{algorithm}}

{\color{black}
\begin{remark}
	Contemporary Bayesian inference tools, such as variational inference~\cite{variational}, can also be appropriately modified to estimate labels of networked data that are expected to increase classification performance.
\end{remark}
}

\section{Numerical Tests}\label{sec:numerical_tests}
The performance of the novel algorithms for both sequential and networked data is evaluated in this section using synthetic and real datasets. \textcolor{black}{To showcase the importance of accounting for data dependencies in the unsupervised ensemble task, the proposed algorithms are compared to their counterparts designed for i.i.d. data. Since most of the numerical tests involve data with multiple, and potentially imbalanced classes, unless otherwise noted, the metrics evaluated are the per-class precision, per-class recall and the overall F-score~\cite{powers2011evaluation}. F-score is the harmonic mean of precision and recall, that is }
\begin{equation}
\text{F-score} = \frac{2}{K}\sum_{k=1}^{K}\frac{\text{Precision}_k * \text{Recall}_k}{\text{Precision}_k + \text{Recall}_k}
\end{equation}
\textcolor{black}{where $\text{Precision}_k, \text{Recall}_k$ denote the per-class precision and recall, respectively. Precision$_k$ for a class $k$ measures the proportion of the data predicted to be in class $k$ that are actually from this class. Recall$_k$ for a class $k$ on the other hand measures the proportion of data that were actually in class $k$ and were predicted to be in class $k$.} \textcolor{black}{To assess how accurately the algorithms can recover learner parameters,} the average confusion matrix estimation error is also evaluated on synthetic data, as 
\begin{alignat}{2}
\bar{\varepsilon}_{CM} &:= \frac{1}{M}\sum_{m=1}^{M}\frac{\|\mathbf{\Gamma}_m - \hat{\mathbf{\Gamma}}_m\|_1}{\|\mathbf{\Gamma}_m\|_1} = \frac{1}{M}\sum_{m=1}^{M}{\|\mathbf{\Gamma}_m - \hat{\mathbf{\Gamma}}_m\|_1} 
\end{alignat}
All results represent averages over 10 independent Monte Carlo runs, using MATLAB~\cite{MATLAB:2015}. Vertical lines in some figures indicate standard deviation. 

\subsection{Sequential data}
For sequential data, Alg.~\ref{alg:ensemble_classification} with and without EM refinement (denoted as \emph{Alg.~\ref{alg:ensemble_classification} + Alg.~\ref{alg:BaumWelch}} and \emph{Alg.~\ref{alg:ensemble_classification}}, respectively) is compared to \textcolor{black}{the single best classifier, with respect to F-score, of the ensemble (denoted as \emph{Single best});} majority voting (denoted as \emph{MV}); the moment-matching method of \cite{traganitis2018} described in Sec.~\ref{sssec:MM_iid} (denoted as \emph{MM}); Alg.~\ref{alg:BaumWelch} initialized with majority voting (denoted as \emph{MV + Alg.~\ref{alg:BaumWelch}}); and, ''oracle'' classifiers. \textcolor{black}{ ``Oracle'' classifiers solve \eqref{eq:label_est_seq} using Viterbi's algorithm~\cite{viterbi}, and have access to ground-truth learner confusion matrices $\{ \mathbf{\Gamma}_m \}_{m=1}^{M}$ and the ground-truth Markov chain transition matrix $\mathbf{T}$. These ``oracle'' classifiers are used as an ideal benchmark for all other methods.} The transition matrix estimation error $\|\mathbf{T} - \hat{\mathbf{T}}\|_1$ is also evaluated using synthetic data. For real data tests, instead of \emph{MM} the EM algorithm of~\cite{dawid1979maximum} initialized with \emph{MM} is evaluated (denoted as \emph{DS}.)

All datasets in this subsection are split into sequences. Here, we assume that per dataset these sequences are drawn from the same ensemble HMM [cf. Sec.~\ref{sec:seq}]. The reported F-score represents the averaged F-score from all sequences.
\subsubsection{Synthetic data}
For synthetic tests,  $S$ sequences of $N_s, s=1,\ldots S$, ground-truth labels each, were generated from a Markov chain, whose transition matrix was drawn at random such that $\mathbf{T}\in\mathcal{C}$.  Each of the $N = \sum_{s}N_s$ ground-truth labels $\{y_n\}_{n=1}^{N}$ corresponds to one out of $K$ possible classes. Afterwards, $\{\mathbf{\Gamma}_m\}_{m=1}^{M}$ were generated at random, such that  $\mathbf{\Gamma}_m\in\mathcal{C}$, for all $m=1,\ldots,M$, and $\lfloor M/2\rfloor + 1$ \textcolor{black}{learner}s are better than random, as per As\ref{ass:better_than_random}. Then \textcolor{black}{learner}s' responses were generated as follows: if $y_n = k$, then the response of \textcolor{black}{learner} $m$ will be generated randomly according to the $k$-th column of its confusion matrix, $\bm{\gamma}_{m,k}$ [cf. Sec.~\ref{sec:prelim}], that is $f_m(x_n) \sim \bm{\gamma}_{m,k}$. 

Fig.~\ref{fig:seq_N_F1} shows the average F-score for a synthetic dataset with $K=4$, $M=10$ \textcolor{black}{learner}s and a variable number of data $N$ and $N_s = 40$ for all $s=1,\ldots,S$. Fig.~\ref{fig:seq_N} shows the average confusion and transition matrix estimation errors for varying $N$. As the number of data $N$ increases the performance of the proposed methods approaches the performance of the ``oracle'' one. Accordingly, the confusion and transition matrix estimates are approaching the true ones as $N$ increases. This is to be expected, as noted in \cite{traganitis2018}, since the estimated moments are more accurate for large $N$. Interestingly, Alg.~\ref{alg:BaumWelch} performs well when initialized with majority voting, even though it reaches a performance plateau as $N$ increases. For small $N$ however, it outperforms the other proposed methods. This suggests that initializing 
Alg.~\ref{alg:BaumWelch} with majority voting is preferable when $N$ is not large enough to obtain accurate moment estimation.

The next experiment evaluates the influence of the number of \textcolor{black}{learner}s $M$ for the sequential classification task. Figs.~\ref{fig:seq_M_F1} and~\ref{fig:seq_M} showcase results for an experiment with $K=4$, fixed number of data $N=10^3, N_s = 40$ and a varying number of \textcolor{black}{learner}s $M$. Clearly, the presence of multiple \textcolor{black}{learner}s is beneficial, as the F-score increases for all algorithms, while the confusion and transition matrix errors decrease. As with the previous experiment, the performance of Alg.~\ref{alg:BaumWelch} + Alg.~\ref{alg:ensemble_classification} improves in terms of F-score, as $M$ increases. 
\begin{figure}[tb]
	\centering
	\includegraphics[width=\columnwidth]{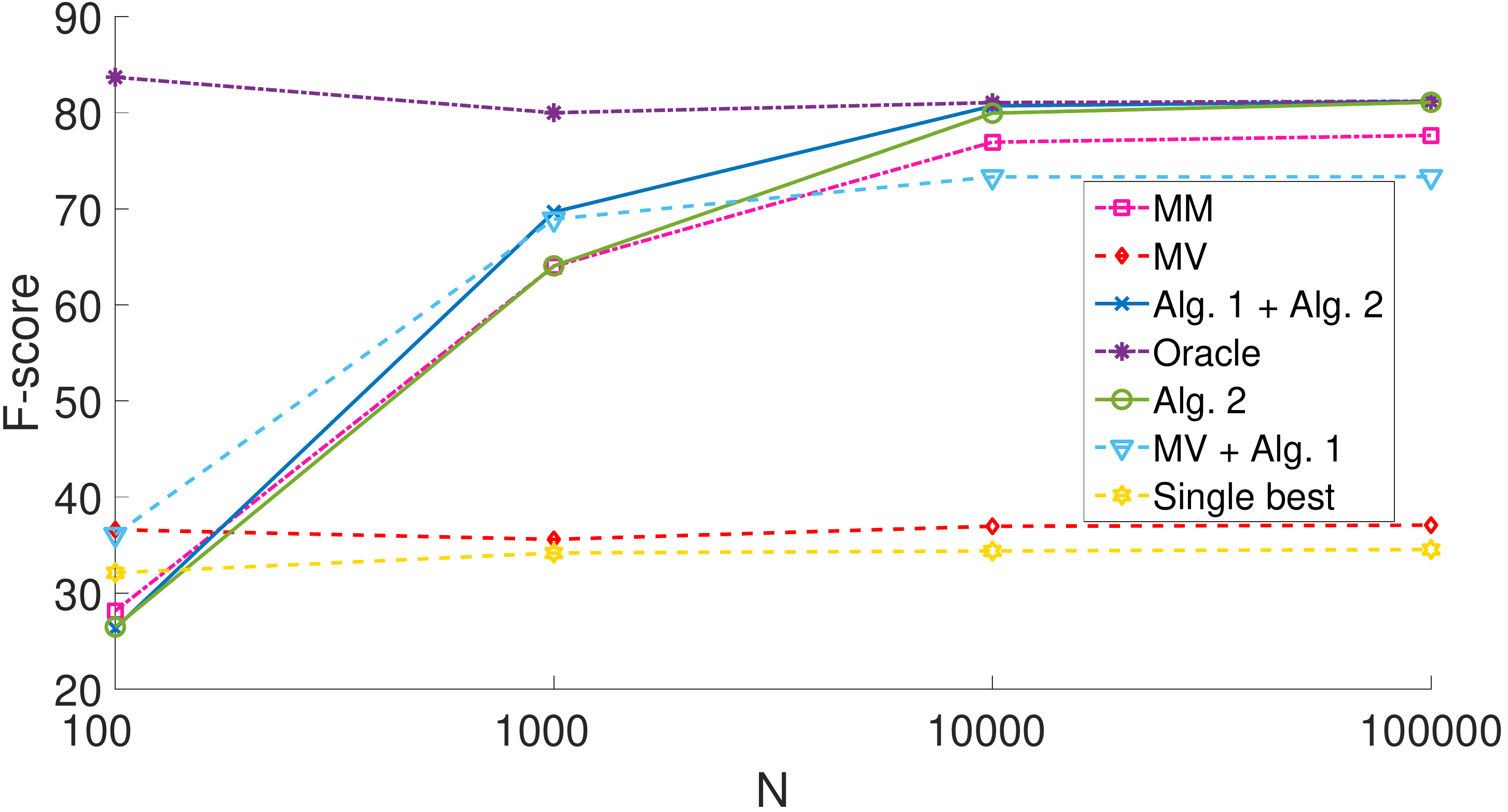}
	\caption{Average F-score for a synthetic sequential dataset with $K=4$ and $M=10$ \textcolor{black}{learner}s}
	\label{fig:seq_N_F1}
\end{figure}

\begin{figure}[tb]
	\centering
	\begin{subfloat}[Confusion matrix estimation error]{\includegraphics[width=\columnwidth]{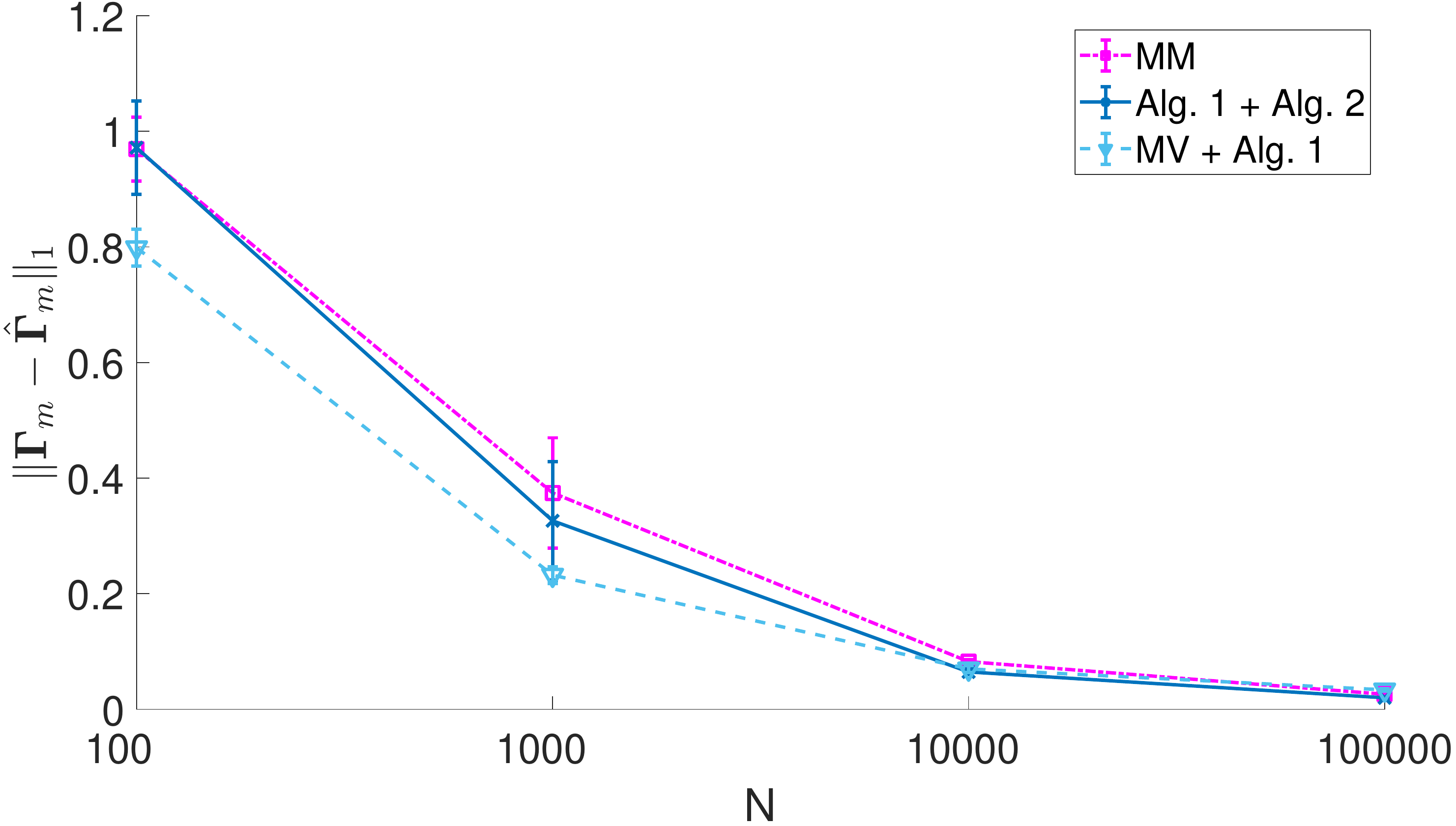} 
			\label{fig:seq_N_Gamma}}
	\end{subfloat}
	
	\begin{subfloat}[Transition matrix estimation error]{\includegraphics[width=\columnwidth]{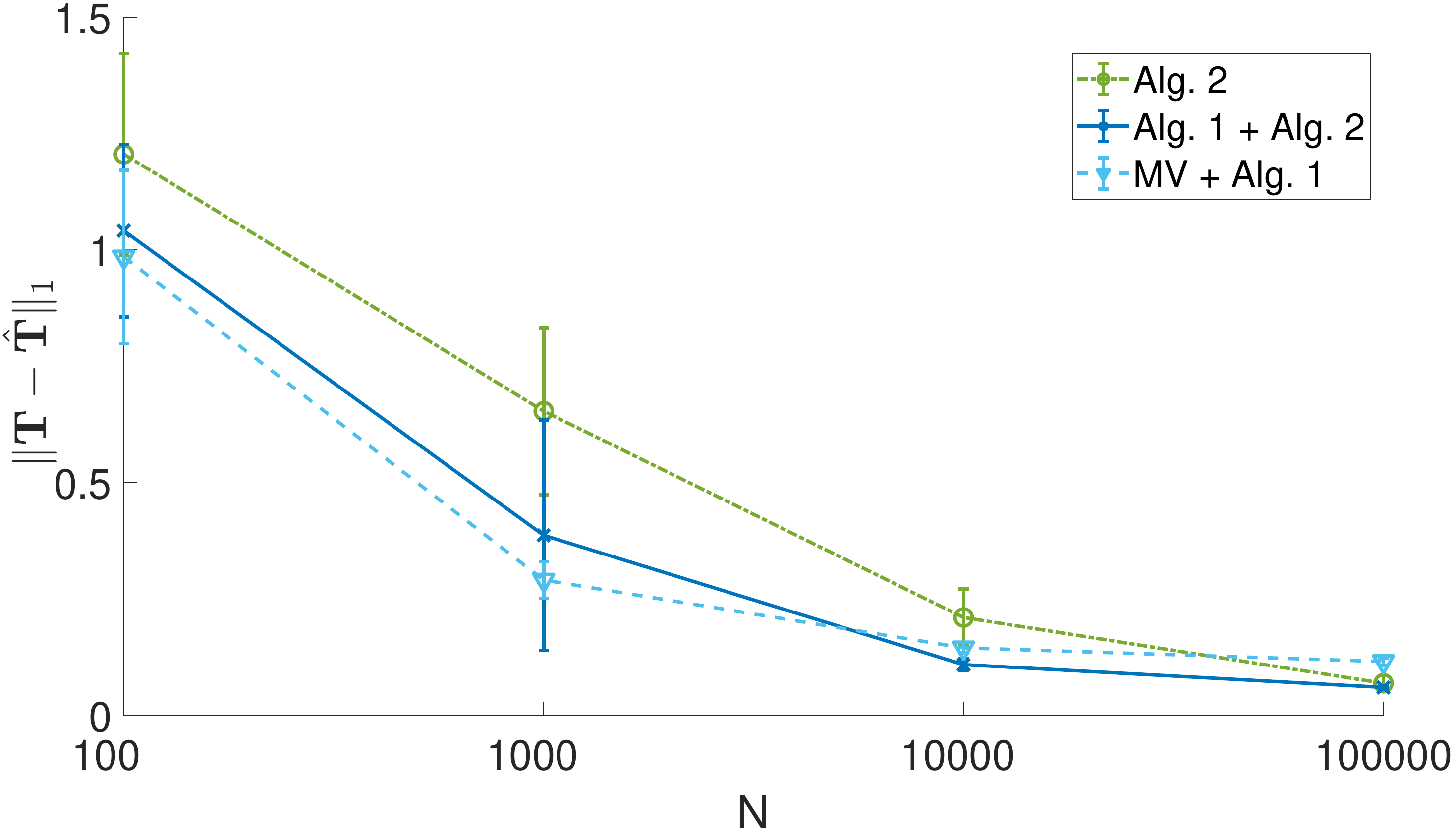}
			\label{fig:seq_N_T}}
	\end{subfloat}
	
	\caption{Average estimation errors of confusion matrices and prior probabilities for a synthetic sequential dataset with $K=4$ and $M=10$ \textcolor{black}{learner}s}\label{fig:seq_N}
\end{figure}

\begin{figure}[tb]
	\centering
	\includegraphics[width=\columnwidth]{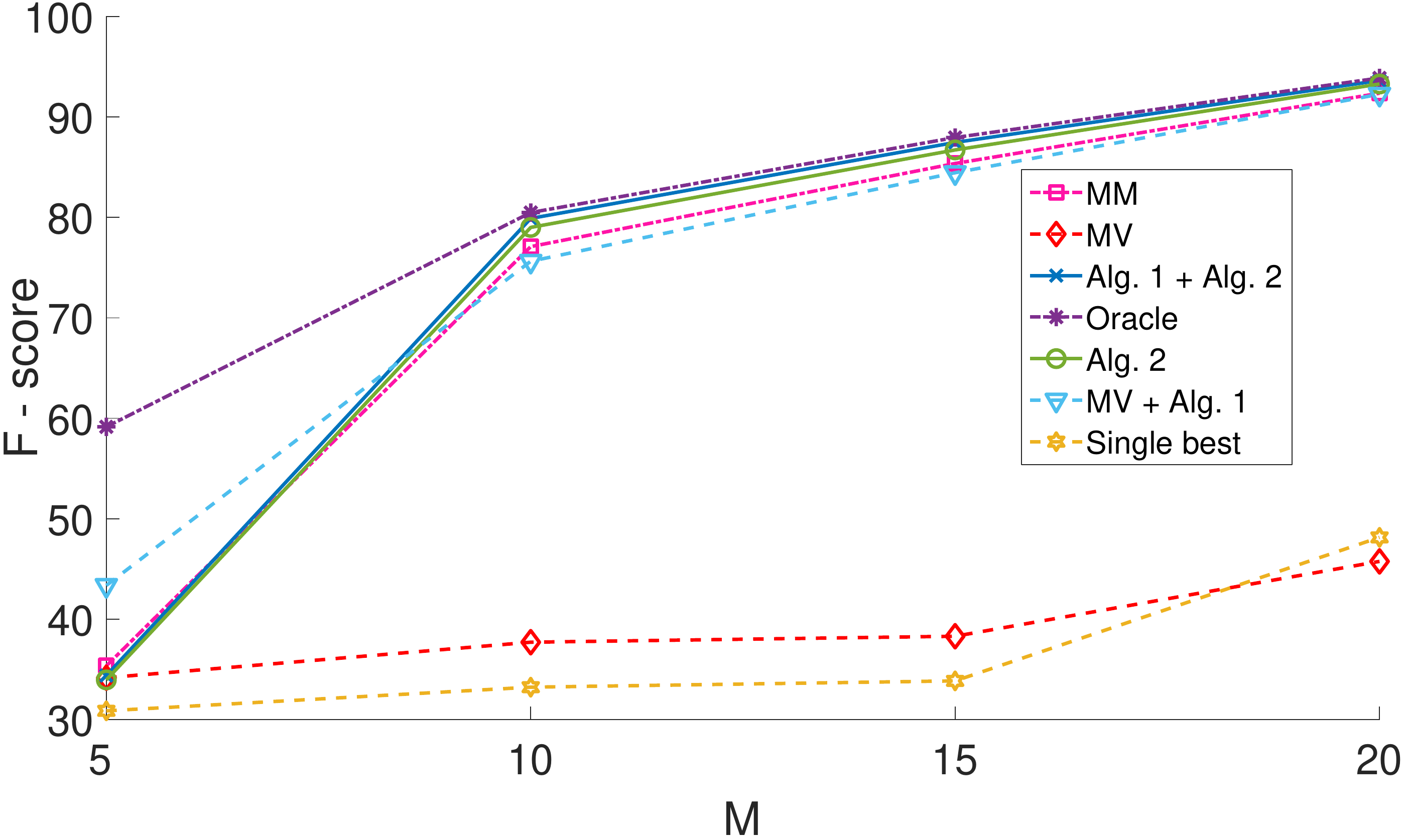}
	\caption{Average F-score for a synthetic sequential dataset with $K=4$ and $N=10^3$ data.}
	\label{fig:seq_M_F1}
\end{figure}

\begin{figure}[tb]
	\centering
	\begin{subfloat}[Confusion matrix estimation error]{\includegraphics[width=\columnwidth]{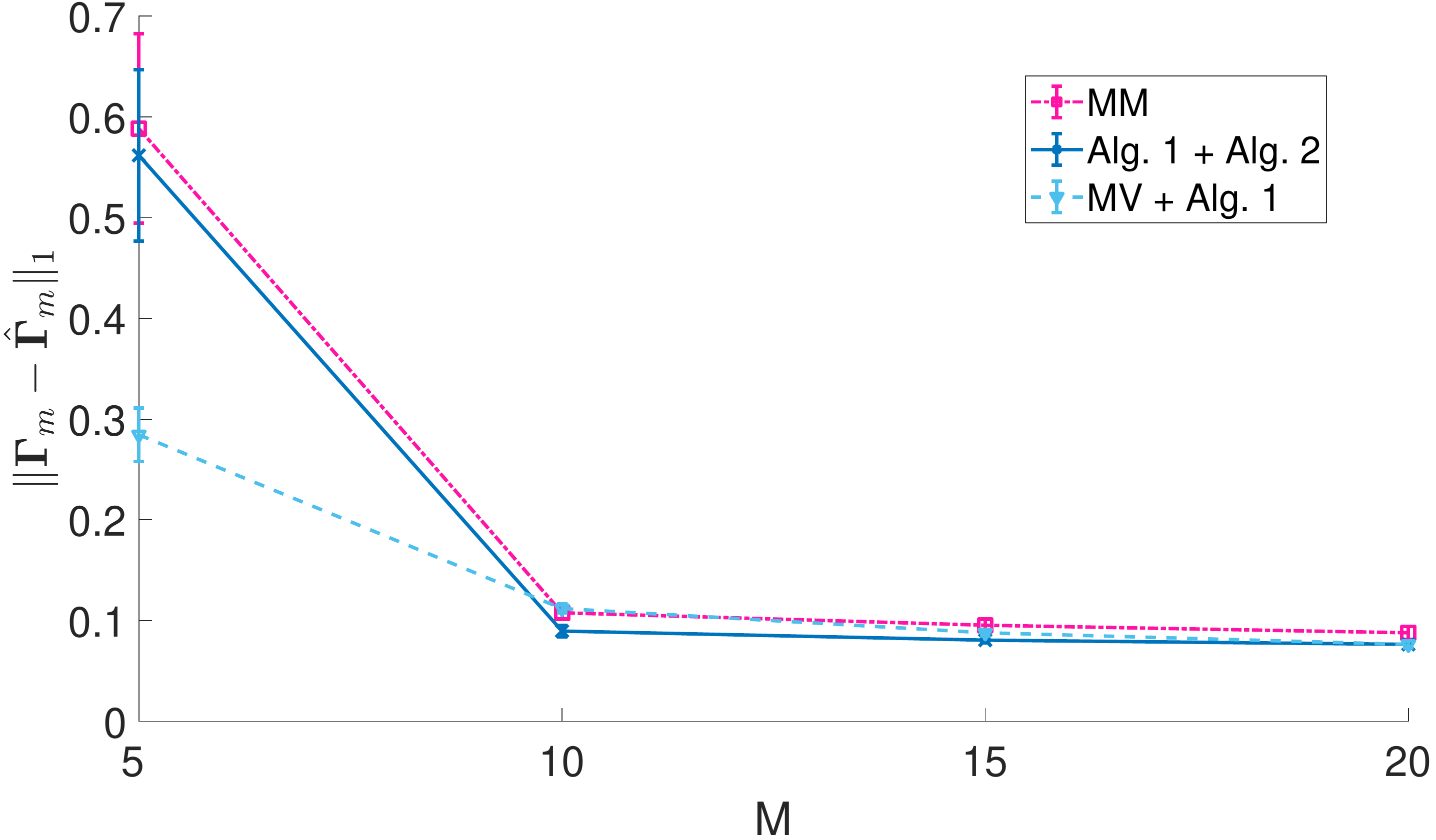} 
			\label{fig:seq_M_Gamma}}
	\end{subfloat}
	
	\begin{subfloat}[Transition matrix estimation error]{\includegraphics[width=\columnwidth]{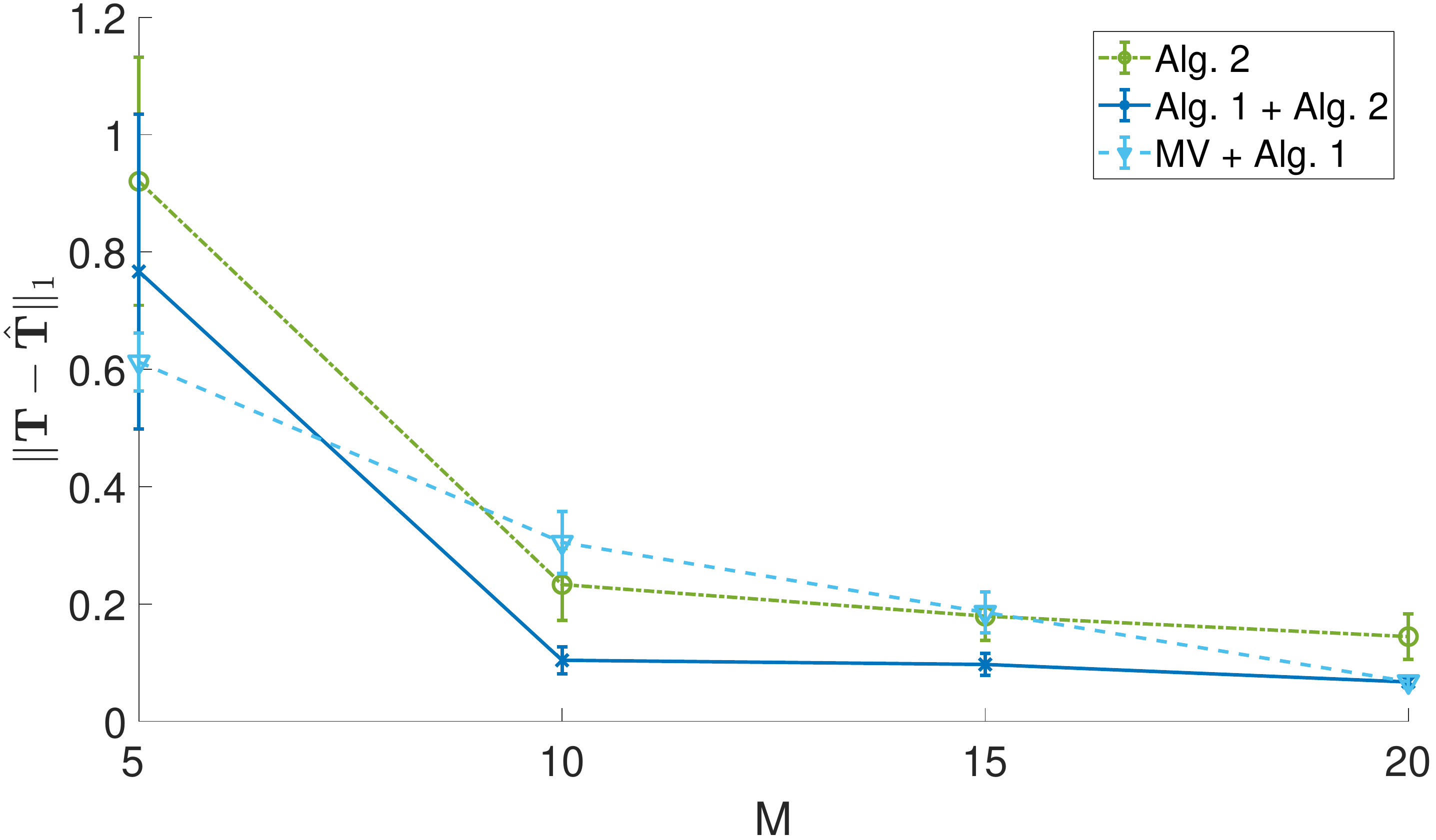}
			\label{fig:seq_M_T}}
	\end{subfloat}
	
	\caption{Average estimation errors of confusion matrices and prior probabilities for a synthetic sequential dataset with $K=2$ and $M=10$ \textcolor{black}{learner}s}\label{fig:seq_M}
\end{figure}

\subsubsection{Real data} Further tests were conducted on {three} real datasets, the part-of-speech (POS) tagging dataset, {the named entity recognition (NER) dataset}, and the biomedical information extraction (IE)~\cite{pmid_hmm_em} dataset. 

For the POS dataset $M=10$ classifiers were trained using NLTK~\cite{nltk} on subsets of the Brown coprus~\cite{browncorpus} to provide part-of-speech (POS) tags of text. The number of tags is $K=12$. Then the classifiers provided POS tags for all words in the Penn Treebank corpus~\cite{penntreebank}, which contains $N=100,676$ words. 

{The NER dataset consists of $5,916$ crowd annotated sentences from the CoNLL database~\cite{conll,Rodrigues2014}. 
The dataset contains $N= 78,107$ words belonging to $K=9$ distinct classes, each describing a different named-entity such as person, location, and  organization. Corresponding to learners in our models, $M=47$ human annotators provided labels for the words in the dataset.}

The Biomedical IE dataset consists of $5,000$ medical paper Abstracts, on which $M=312$ \textcolor{black}{human} annotators, were tasked with marking all text spans in a given Abstract that identify the population of a randomized controlled trial. The dataset consists of $N=7,880,254$ words belonging into $K=2$ classes: in a span identifying the population or outside. For this particular dataset we evaluate Precision and Recall per sequence in the following way, which was suggested in~\cite{pmid_hmm_em}
\begin{align*}
\text{Precision} & = \frac{\text{\# true positive words}}{\text{\# words in a predicted span}} \\
\text{Recall} & = \frac{\text{\# words in a predicted span}}{\text{\# words in ground-truth span}}.
\end{align*}
\textcolor{black}{These new definitions of precision and recall, allow us to credit partial matches. The justification for using these alternative definitions is that the previous ones are too strict for this task, where annotated sequences are especially long.} {We used the $M=120$ annotators that had provided the largest number of responses to mantain reasonable computational complexity.}
Results for all datasets are listed in Tab.~\ref{tab:seq_real}. For the POS dataset, it can be seen that Alg.~\ref{alg:BaumWelch} + Alg.~\ref{alg:ensemble_classification} performs best in all metrics. { Similar results are showcased for the NER dataset, with majority voting achieving the best precision, and Alg.~\ref{alg:BaumWelch} + Alg.~\ref{alg:ensemble_classification} exhibiting the best recall and overall F-score.} For the Biomedical IE dataset, while majority voting achieves the best precision of all algorithms, due to its low recall, the overall F-score is low. However, Alg.~\ref{alg:BaumWelch} + Alg.~\ref{alg:ensemble_classification} outperforms competing alternatives with regards to F-score, {while MV + Alg.~\ref{alg:BaumWelch} exhibits the best recall. Note that the single best learners for the NER and Biomedical IE datasets are evaluated only on the subsets of data for which they have provided responses; the best learner for the NER dataset annotated approximately $12,500$ words, whereas the best learner for the Biomedical IE dataset annotated approximately $14,300$ words.} { The performance of the proposed label aggregation methods relies on multiple parameters such as the number and ability of learners, and how well the proposed model approximates the learner behavior and data. The modest performance gains of Algs. \ref{alg:ensemble_classification}, \ref{alg:BaumWelch} and MV + \ref{alg:BaumWelch} compared to the single best learner may be attributed to such issues. }

\begin{table*}[tb]
	\centering
	\begin{tabular}{ | c | c | c | c || c | c | c | c | c | c | c |}
		\hline
		Dataset & K & M & N & Metric & \textcolor{black}{ Single best} & MV & DS & Alg.~\ref{alg:ensemble_classification} & Alg.~\ref{alg:BaumWelch} + Alg.~\ref{alg:ensemble_classification} & MV + Alg.~\ref{alg:BaumWelch}   \\ [0.5ex] \hline
		\multirow{3}{*}{POS} & \multirow{3}{*}{$12$} & \multirow{3}{*}{$10$} & \multirow{3}{*}{$100,676$} &
		  Precision & \textcolor{black}{$0.2316$} & $0.22916$ & $0.23406$ & $0.24785$ & $\bf0.25856$ & $0.22972$ \\\cline{5-11}
		 & & & & Recall & \textcolor{black}{$0.2573$} & $0.23518$ & $0.25629$ & $0.24396$ & $\bf0.26638$ & $0.24497$ \\\cline{5-11}
		 & & & & F-score & \textcolor{black}{$0.2356$} & $0.22598$ & $0.22264$ & $0.23352$ & $\bf0.24735$ &  $0.23007$ \\ [0.5ex] \hline
		 \multirow{3}{*}{ NER} & \multirow{3}{*}{$9$} & \multirow{3}{*}{$47$} & \multirow{3}{*}{$78,107$} &
		 Precision & { $0.9*$ } & $\bf0.79$ & $0.77$ & $0.74$ & $0.77$ & $0.75$\\ \cline{5-11}
		 & & & & Recall & {$0.24*$ } & $0.59$ & $0.66$ & $0.89$ & $\bf0.69$ & $0.66$ \\ \cline{5-11}
		 & & & & F-score & {$0.89*$ } & $0.68$ & $0.71$ & $0.62$ & $\bf0.72$ &  $0.70$ \\ \hline
		\multirow{3}{*}{Biomedical IE} & \multirow{3}{*}{$2$} & \multirow{3}{*}{$120$} & \multirow{3}{*}{$7,880,254$} &
		Precision & {$0.94*$} & {$\bf0.89$} & {$0.81$} & {$0.75$} & {$0.69$} & {$0.62$}\\ \cline{5-11}
		& & & & Recall & {$0.76*$} & {$0.45$} & {$0.57$} & {$0.60$} & {$0.68$} & {$\bf0.74$} \\ \cline{5-11}
		& & & & F-score & {$0.84*$} & {$0.6$} & {$0.66$} & {$0.67$} & {$\bf0.68$} &  {$0.67$} \\ \hline
	\end{tabular}
	\bigskip
	\caption{Results for real data experiments with sequential data. {The asterisk $*$ indicates that results are from a subset of available data.}}
	\label{tab:seq_real}
\end{table*}

\subsection{Networked data}
For networked data, Alg.~\ref{alg:ensemble_class_gendep} (denoted as \emph{Alg.~\ref{alg:ensemble_class_gendep}}) is compared to \textcolor{black}{the single best classifier, with respect to F-score, of the ensemble (denoted as \emph{Single best}),} majority voting (denoted as \emph{MV}), the moment-matching method of \cite{traganitis2018} described in Sec.~\ref{sssec:MM_iid} (denoted as \emph{MM}) and Alg.~\ref{alg:MRF_EM} initialized with majority voting (denoted as \emph{MV + Alg.~\ref{alg:MRF_EM}}). For real data tests, instead of \emph{MM} the EM algorithm of~\cite{dawid1979maximum} initialized with \emph{MM} is evaluated (denoted as \emph{DS}.)
{\color{black} The average degree $\bar{d}$ of the network is used to quantify the degree of data dependency ($\bar d $ is the number of connections averaged across nodes).} \subsubsection{Synthetic data}
For the synthetic data tests, an $N$-node, $K$ community graph is generated using a stochastic block model~\cite{sbm}. Each community corresponds to a class, and the labels $\{ y_n \}_{n=1}^{N}$ indicate the community each node belongs to, i.e. $y_n = k$ if node $n$ belongs to the $k$-th community. Afterwards, $\{\mathbf{\Gamma}_m\}_{m=1}^{M}$ were generated at random, such that  $\mathbf{\Gamma}_m\in\mathcal{C}$, for all $m=1,\ldots,M$, and \textcolor{black}{learners} are better than random, as per As\ref{ass:better_than_random}. Then \textcolor{black}{learners}' responses were generated as follows: if $y_n = k$, then the response of \textcolor{black}{learner} $m$ will be generated randomly according to the $k$-th column of its confusion matrix, $\bm{\gamma}_{m,k}$ [cf. Sec.~\ref{sec:prelim}], that is $f_m(x_n) \sim \bm{\gamma}_{m,k}$. For the synthetic data tests, we set $\delta_{n n'} = M$. Fig.~\ref{fig:mrf_N_F1} shows the average F-score for a synthetic dataset with $K=4$ and $M=10$ \textcolor{black}{learners} for varying number of data $N$. \textcolor{black}{Here the average degree is $\bar{d} = 0.5$.} Fig.~\ref{fig:mrf_N_Gamma} shows the average confusion estimation error as $N$ increases. 
 As with sequential data, the F-score of the proposed algorithms increases with $N$ growing, and confusion matrix estimation error decreases. \emph{MV+ Alg.~\ref{alg:MRF_EM}} quickly reaches a plateau of performance as \emph{MV} also does not improve with increasing $N$. At the same time \emph{Alg.~\ref{alg:ensemble_class_gendep}} capitalizes on the initialization provided by \emph{MM}.
 \textcolor{black}{
  Fig.~\ref{fig:mrf_N_F1_2} shows the F-score for a similar experiment, but with network average degree $\bar{d} = 5$, i.e. higher graph connectivity. Here, we observe algorithmic behavior similar to that of the previous experiment; however, due to the higher connectivity of the graph \emph{Alg.~\ref{alg:ensemble_class_gendep}} has a greater F-score gap to \emph{MM}. This indicates that networked data with higher connectivity benefit more from \emph{Alg.~\ref{alg:ensemble_class_gendep}} and \emph{MV+ Alg.~\ref{alg:MRF_EM}}.}

 \begin{figure}[tb]
 	\centering
 	\includegraphics[width=\columnwidth]{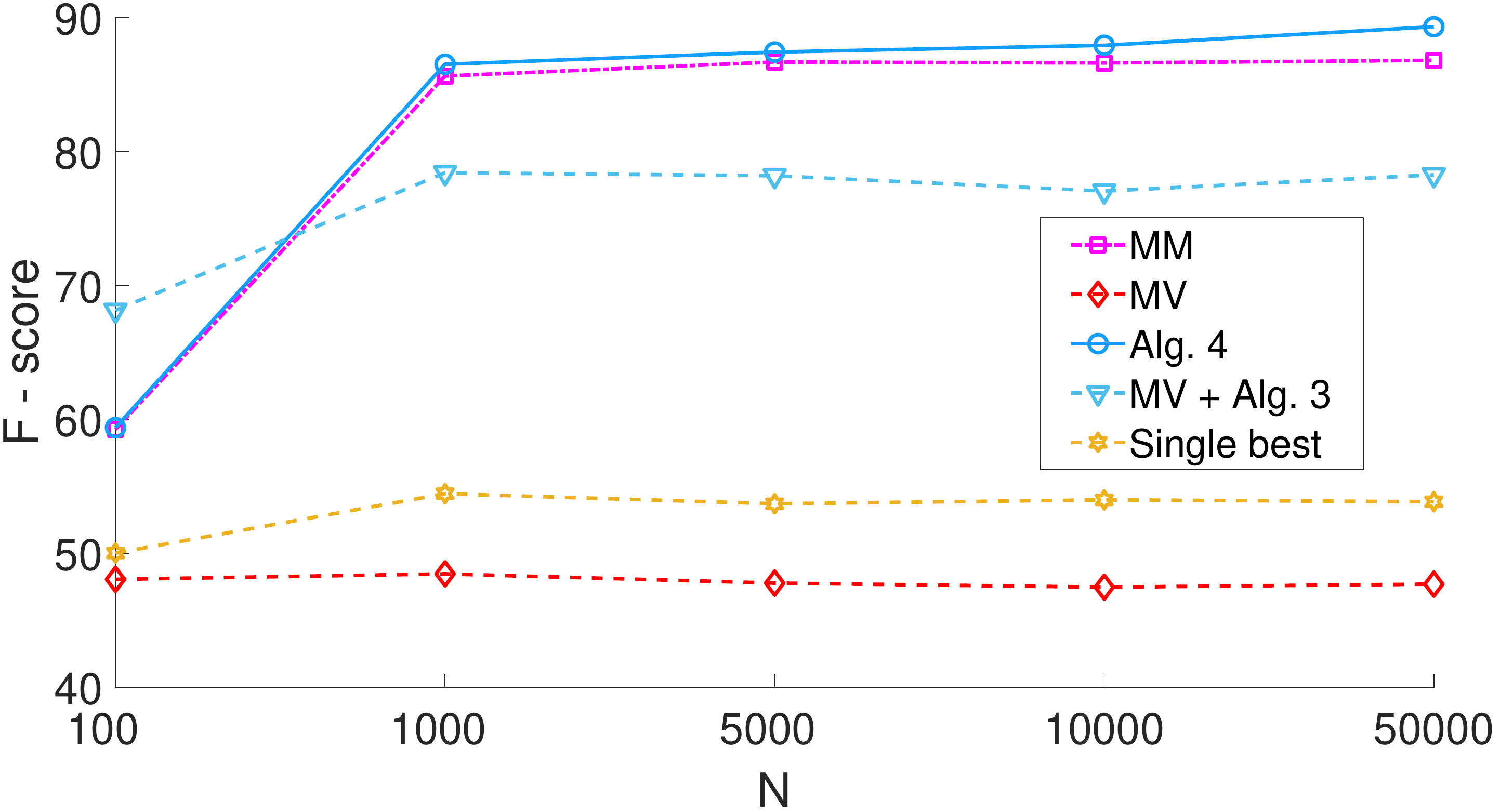}
 	\caption{Average F-score for a synthetic networked dataset with $K=4$, $M=10$ \textcolor{black}{learners} and average degree $\bar{d} = 0.5$.}
 	\label{fig:mrf_N_F1}
 \end{figure}
 
 \begin{figure}[tb]
 	\centering
 	\includegraphics[width=\columnwidth]{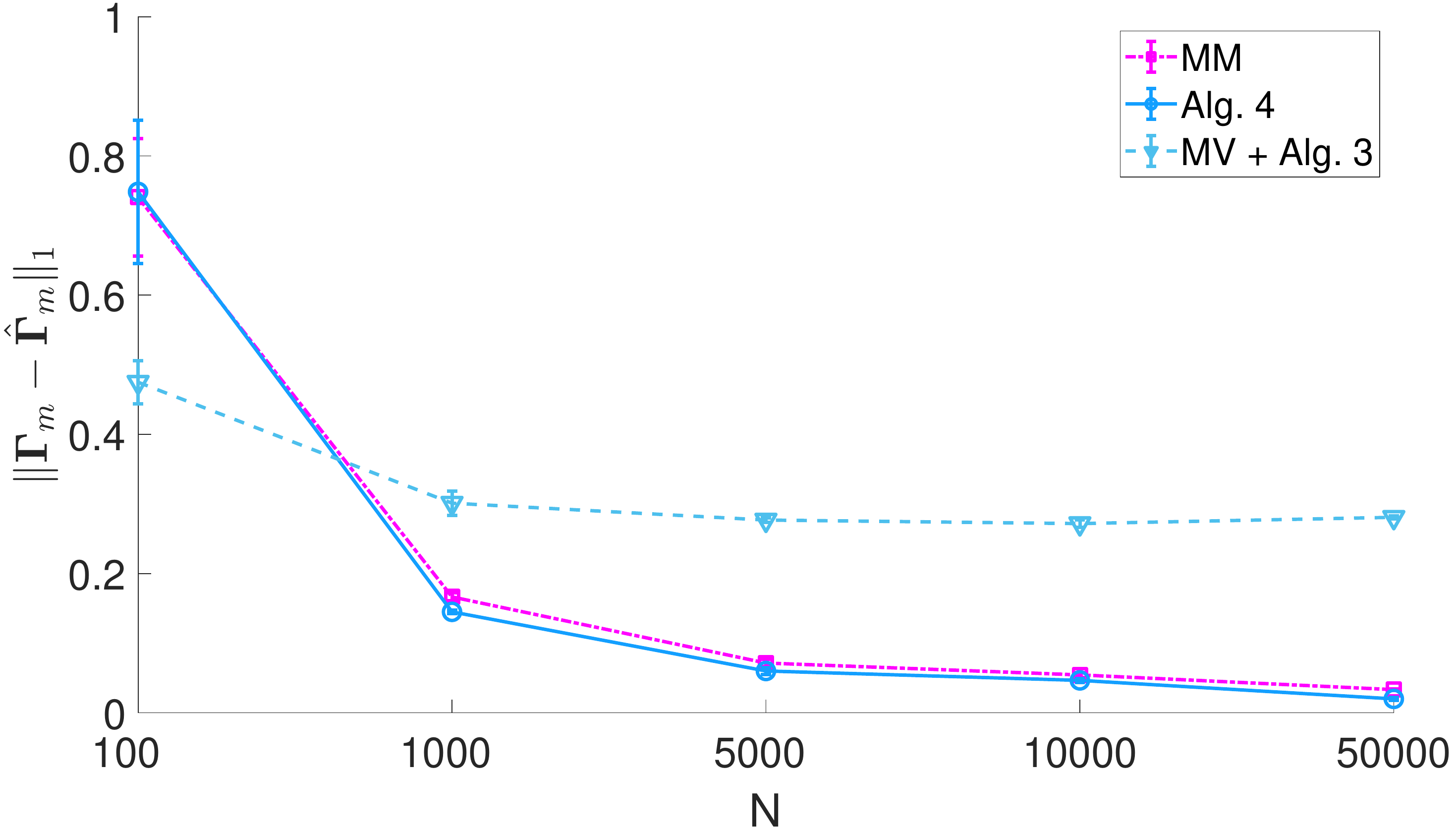} 
 	\caption{Average estimation errors of confusion matrices for a synthetic networked dataset with $K=4$ and $M=10$ \textcolor{black}{learners} }\label{fig:mrf_N_Gamma}
 \end{figure}

 \begin{figure}[tb]
	\centering
	\includegraphics[width=\columnwidth]{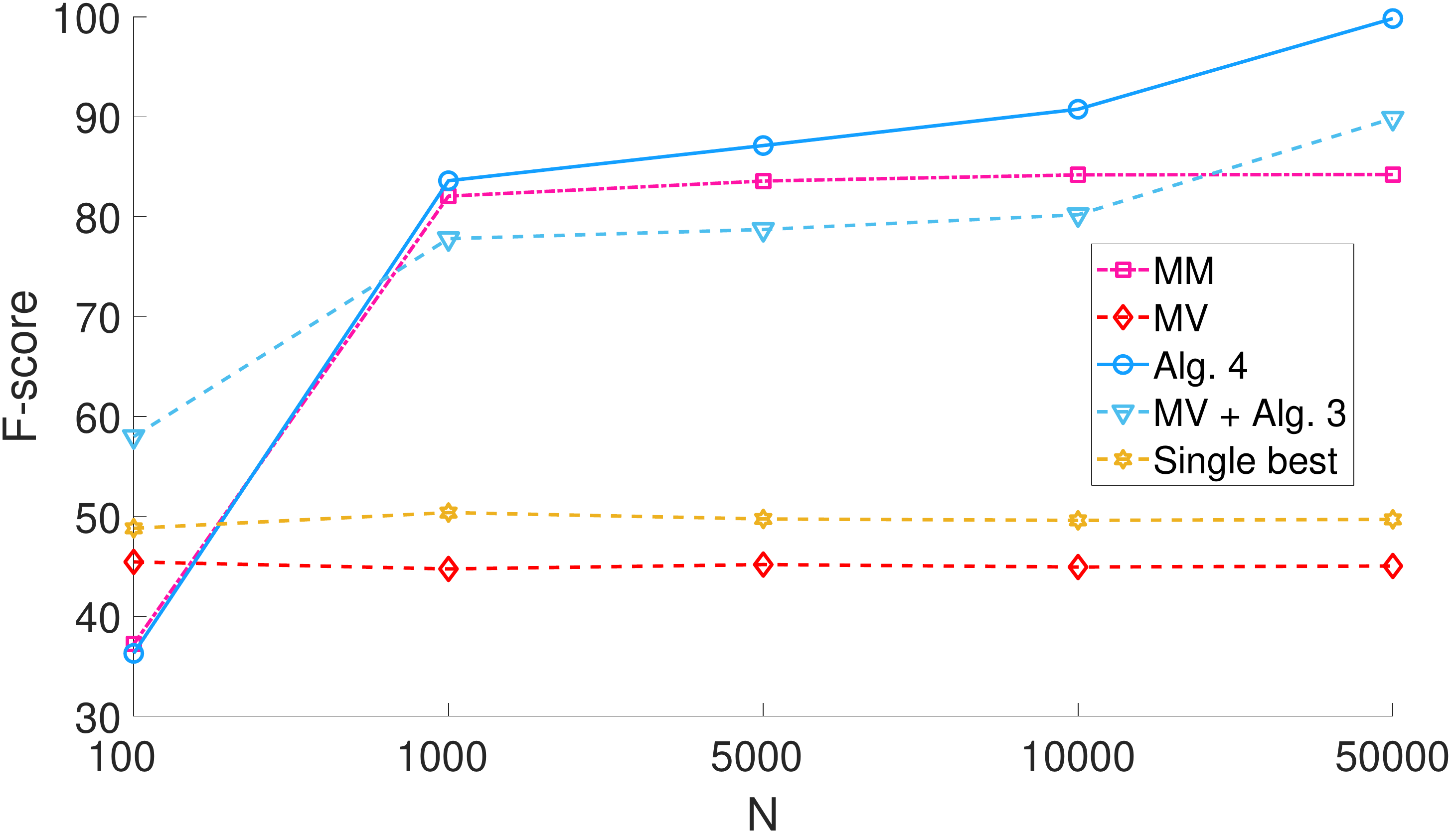}
	\caption{Average F-score for a synthetic networked dataset with $K=4$, $M=10$ \textcolor{black}{learners} and average degree $\bar{d} = 5$.}
	\label{fig:mrf_N_F1_2}
\end{figure}
 
\subsubsection{Real data}
Further tests were conducted on {six} real datasets. For the Cora, Citeseer~\cite{cora_citeseer} and Pubmed~\cite{pubmed} datasets the graph $\mathcal{G}$ and data features $\{x_n\}$ are provided with the dataset. In these cases, $M=10$ classification algorithms from MATLAB's machine learning toolbox were trained on different randomly selected subsets of the datasets. Afterwards, these algorithms provided labels for all data in the dataset.
{Boston University's biomedical image library (BU-BIL)~\cite{BU-BIL} magnetic resonance imaging (MRI) dataset contains $35$ images of rabbit aortas with size $25\times24$.  Learner responses per pixel are gathered through Amazon's mechanical turk. Here, the per-image graph $\mathcal{G}$ is the grid graph defined by the pixels of each image, that is each pixel is connected to its adjacent $8$ pixels.} For these datasets, we set $\delta_{n n'} = M$.
\textcolor{black}{The Music genre and Sentence Polarity datasets~\cite{musicgenre_senpoldata} are crowdsourcing datasets, where} the features $\{x_n\}$ are provided \textcolor{black}{and learner responses $\mathbf{F}$ are gathered through crowdsourcing platforms}. In these cases, the graphs were generated from the data features using $k$-nearest neighbors. Since the graphs are generated from the data features, here we set $\delta_{n n'} = M_n/2$, where $M_n$ is the number of \textcolor{black}{learner}s that have provided a response for the $n$-th datum. 

The Cora, CiteSeer and Pubmed datasets are citation networks and the versions used here are preprocessed by~\cite{adadif}. The Cora dataset consists of $N=2,708$ scientific publications classified into $K=7$ classes. The features $\{x_n\}$ of this dataset are sparse $1,433$-dimensional vectors and for this dataset each classification algorithm was trained on a random subset of $150$ instances.  The CiteSeer dataset consists of $N=3,312$ scientific publications classified into one of $K=6$ classes. The features $\{x_n\}$ of this dataset are sparse $3,703$-dimensional vectors, and each classification algorithm was trained on a subset of $100$ instances.
The Pubmed dataset is a citation network that consists of $N=19,717$ scientific publications from the Pubmed database pertaining to diabetes, classified into one of $K=3$ classes. The features $\{x_n\}$ of this dataset are $500$-dimensional vectors, and each classification algorithm was trained on a subset of $300$ instances. {Targeting segmentation, pixels of each image in the  BU-BIL are classified in $K=2$ classes, indicating whether they belong to a biological structure or not, by $M=7$ human annotators. The total number of pixels from the $35$ images is $N=28,181$.}
 The Music genre dataset contains $N=700$ song samples (each of duration $30$secs), belonging into $K=10$ music categories, annotated by $M=44$ \textcolor{black}{human} annotators. The sentence polarity dataset contains $N=5,000$ sentences from movie reviews, classified into $K=2$ categories (positive or negative), annotated by $M=203$ \textcolor{black}{human} annotators.

In most datasets \emph{Alg.~\ref{alg:ensemble_class_gendep}} exhibits the best performance in terms of F-score followed closely by \emph{MV+Alg.~\ref{alg:MRF_EM}}. For the Music Genre { and BU-BIL MRI} datasets however, \emph{MV+Alg.~\ref{alg:MRF_EM}} outperforms \emph{Alg.~\ref{alg:ensemble_class_gendep}}. This is to be expected for the Music Genre dataset, as $N$ is relatively small for this dataset and as such the estimated \textcolor{black}{learner} moments are not very accurate. This can also be seen from the fact that \emph{MV} outperforms \emph{DS}. {Similarly, for the BU-BIL MRI dataset \emph{MV} outperforms \emph{DS}, explaining the better performance of \emph{MV+Alg.~\ref{alg:MRF_EM}.}}
\textcolor{black}{For all datasets, \emph{Alg.~\ref{alg:ensemble_class_gendep}} and \emph{MV+Alg.~\ref{alg:MRF_EM}} consistently outperform the single best classifier. Also, note that the single best \textcolor{black}{learner}s for the Music genre and sentence polarity datasets are evaluated only on the subsets of data for which they have provided responses. In particular, the best \textcolor{black}{learner} for the Music-genre  has annotated $10$ data, while the best \textcolor{black}{learner} for the Sentence-polarity  has annotated only $6$ data.
Another interesting observation is that for most datasets having a relatively large average degree $\bar{d}$, \emph{Alg.~\ref{alg:ensemble_class_gendep}} and \emph{MV+Alg.~\ref{alg:MRF_EM}} have a greater performance gap to their counterparts that do not account for the structure of networked data. Similar to synthetic data, this suggests that well connected datasets can benefit more from these types of approaches.} { Similarly to the sequential case, the modest performance gains of Algs. \ref{alg:MRF_EM},  and MV + \ref{alg:MRF_EM} compared to the single best learner in the BU-BIL MRI dataset may be attributed to modeling discrepancies or significantly different ability levels between the learners. }
 {All in all, these results show that inclusion of graph information can be beneficial for the unsupervised ensemble or crowdsourced classification task, even when the graph is noisy (as with Sentence Polarity or the Music genre datasets.)} 

\begin{table*}[tb]
	\centering
	\begin{tabular}{|c |c | c |c | c || c  |  c |  c | c | c | } 
		\hline
		Dataset  & K & M  & N & \textcolor{black}{$\bar{d}$} & \textcolor{black}{ Single best} & MV & DS & Alg.~\ref{alg:ensemble_class_gendep} & MV + Alg.~\ref{alg:MRF_EM}  \\ [0.5ex] 
		\hline\hline
		Cora  & $10$ & $10$ & $2,708$ & \textcolor{black}{$3.9$} & \textcolor{black}{$0.51$} & $0.2785$ & $0.4228$ & $\bf0.6412$ & $0.336$ \\  \hline
		CiteSeer  & $7$ & $10$  & $3,312$ & \textcolor{black}{$2.77$} & \textcolor{black}{$0.45$} & $0.4257$ & $0.4449$ & $\bf0.5244$ & $0.5224$  \\ \hline
		Pubmed  & $3$ & $10$  & $19,717$ & \textcolor{black}{$4.49$} & \textcolor{black}{$0.717$} & $0.6968$  & $0.7437$ & $\bf0.7667$ & $0.7595$ \\ \hline
		Music Genre  & $10$ & $44$ & $700$ & \textcolor{black}{$4.8$} & {$1*$} & $0.7046$ & $0.4746$ & $0.7649$ & $\bf0.8029$\\ \hline
		Sen. Polarity  & $2$ & $203$  & $5,000$ & \textcolor{black}{$1.8$} & {$1*$} & $0.8895$ & $0.9129$ & $\bf0.9153$ & $0.9139$  \\ \hline
		{BU-BIL MRI}  & $2$ & $7$  & $28,181$ & {$7.1$} & {$0.851$} & {$0.861$} & {$0.859$} & {$0.86$} & {$\bf0.863$}  \\ \hline
	\end{tabular}
	\bigskip
	\caption{F-score for Real data experiments with Networked data. {The asterisk $*$ indicates that results are from a subset of available data.}} 	\label{tab:realdata}
\end{table*}

\section{Conclusions and future directions}\label{sec:conclusion}
This paper introduced two novel approaches to unsupervised ensemble and crowdsourced classification in the presence of data dependencies.  Two types of data dependencies were investigated: i) Sequential data; and ii) Networked data, where the dependencies are captured by a known graph. The performance of our novel schemes was evaluated on real and synthetic data.

Several interesting research avenues open up: i) Distributed and online implementations of the proposed algorithms; \textcolor{black}{ii) use of contemporary tools such as variational inference to boost performance of the novel approaches;} iii) ensemble classification with dependent classifiers and dependent data; iv) development of more realistic \textcolor{black}{learner} models for dependent data; 
v) extension of the proposed methods to semi-supervised ensemble learning; \textcolor{black}{vi) rigorous performance analysis of the proposed models}.
\bibliographystyle{IEEEtran}
\bibliography{./bib/EnsembleJournal}

\clearpage
\appendices
\section{The forward-backward algorithm}
\label{app:fwdbwd}
Let $b_{n,k}$ denote the probability of observing $\{f_m(x_n) \}_{m=1}^{M}$ given that $y_n = k$, that is 
\begin{equation}
\label{eq:bobs}
b_{n,k} = \prod_{m=1}^{M}\prob(f_m(x_n)|y_n = k) = \prod_{m=1}^{M}\Gamma_m(f_m(x_n),k).
\end{equation}

The forward-backward algorithm~\cite{hmmtutor} seeks to efficiently obtain the probability of the observed variable sequence $\{f_m(x_n) \}_{n=1,m=1}^{N,M}$, given current HMM parameter estimates $\bm{\theta}$. It takes advantage of the fact that the past and future states of a Markov chain are independent given the current state. Tailored for our ensemble HMM, we have
\begin{align}
\prob(\mathbf{F}|\bm{\theta}) & = \sum_{k=1}^{K}\prob(\mathbf{F}_{1:n},y_n=k;\bm{\theta})\prob(\mathbf{F}_{n+1:N}|y_n = k;\bm{\theta}), 
\end{align}
where $\mathbf{F}_{1:n}$ is a matrix collecting all \textcolor{black}{learner} responses for $n'=1,\ldots,n$, and $\mathbf{F}_{n+1:N}$ is a matrix collecting \textcolor{black}{learner} responses for $n'=n+1,\ldots,N$.

The forward backward algorithm computes the probability of the observed sequence iteratively using so-called forward and backward variables.
Define the forward variable as
\begin{equation}
\alpha_{n,k} = \prob(\mathbf{F}_{1:n},y_n=k;\bm{\theta})
\end{equation}
Then let
\begin{equation}
\alpha_{1,k} = \prob(y_1 =k)b_{1,k} \quad \text{for}\quad k=1,\ldots,K.
\end{equation}
and for $n=1,\ldots,N$
\begin{equation}
\label{eq:fwd_var_rec}
\alpha_{n+1,k} = b_{n+1,k}\sum_{k'=1}^{K}\alpha_{n,k'}T(k,k') \;.
\end{equation}

Upon defining the backward variables as
\begin{equation}
\beta_{n,k} = \prob(\mathbf{F}_{n+1:N}|y_n = k;\bm{\theta})
\end{equation}
\begin{equation}
\beta_{N,k} = 1 \quad \text{ for } k=1,\ldots,K
\end{equation}
it holds for $n=N-1,\ldots,1$ that 
\begin{equation}
\label{eq:bwd_var_rec}
\beta_{n,k} = \sum_{k'=1}^{K}T(k,k')\beta_{n+1,k'}b_{n+1,k'}.
\end{equation}
All forward and backward variables can be computed iteratively using \eqref{eq:fwd_var_rec} and \eqref{eq:bwd_var_rec}. Having computed all forward and backward variables, the probability of the observed variable sequence is given by
\begin{align}
\prob(\mathbf{F}|\bm{\theta}) = \sum_{k=1}^{K}\alpha_{n,k}\beta_{n,k}. 
\end{align}
which holds for any $n\in\{1,\ldots,N\}$. Then the variables of interest, $q_{nk}$ and $\xi_n(k,k')$, can be obtained as 
\begin{equation}
q_{nk} = \prob(y_n = k|\mathbf{F},\bm{\theta}) = \frac{\alpha_{n,k} \beta_{n,k}}{\sum_{k'=1}^{K}\alpha_{n,k'} \beta_{n,k'} },
\end{equation}
\begin{align}
\xi_n (k,k') & = \prob(y_n = k, y_{n+1} = k' |\mathbf{F},\bm{\theta} ) \\ &  = \frac{\alpha_{n,k}T(k,k')b_{n+1,k'}\beta_{n+1,k'}}{\sum_{k',k^{''}=1}^{K}\alpha_{n,k'}T(k',k^{''})b_{n+1,k^{''}}\beta_{n+1,k^{''}} }.\notag
\end{align}
\ifCLASSOPTIONcaptionsoff
\newpage
\fi


%

\end{document}